\newcommand{\raisemath}[1]{\mathpalette{\raisem@th{#1}}}
\newcommand{\raisem@th}[3]{\raisebox{#1}{$#2#3$}}
\newcolumntype{C}[1]{>{\centering\arraybackslash}m{#1}}
\newtheorem{proposition}{Proposition}
\newtheorem{lemma}[proposition]{Lemma}
\newtheorem{corollary}[proposition]{Corollary}
\DeclareMathOperator{\E}{E}
\DeclareMathOperator{\Var}{Var}
\DeclareMathOperator*{\argmax}{argmax}
\newcommand{\T}{{\raisemath{-1pt}{\mathsf{T}}}}
\title{The Numerics of GANs}
\author{
  Lars Mescheder \\
  Autonomous Vision Group\\
  MPI T\"ubingen\\
  \texttt{lars.mescheder@tuebingen.mpg.de} \\
  \And
  Sebastian Nowozin \\
  Machine Intelligence and Perception Group \\
  Microsoft Research\\
  \texttt{sebastian.nowozin@microsoft.com} \\
  \And
  Andreas Geiger \\
  Autonomous Vision Group\\
  MPI T\"ubingen\\
  \texttt{andreas.geiger@tuebingen.mpg.de} \\
}
\begin{document}
\maketitle

\begin{abstract}
In this paper, we analyze the numerics of common algorithms for training
Generative Adversarial Networks (GANs).
Using the formalism of smooth two-player games we analyze the associated
gradient vector field of GAN training objectives.
Our findings suggest that the convergence of current algorithms suffers due to
two factors:
i) presence of eigenvalues of the Jacobian of the gradient vector field with
zero real-part, and
ii) eigenvalues with big imaginary part.
Using these findings, we design a new algorithm that overcomes some of these
limitations and has better convergence properties.
Experimentally, we demonstrate its superiority on training common GAN
architectures and show convergence on GAN architectures that are known to be
notoriously hard to train.
\end{abstract}

\section{Introduction}
Generative Adversarial Networks (GANs) \cite{goodfellow2014generative} have been very successful in learning probability distributions. Since their first appearance, GANs have been successfully applied to a variety of tasks, including image-to-image translation \cite{isola2016image}, image super-resolution \cite{ledig2016photo}, image in-painting \cite{yeh2016semantic} domain adaptation \cite{tzeng2017adversarial}, probabilistic inference \cite{mescheder2017adversarial, dumoulin2016adversarially, donahue2016adversarial} and many more.

While very powerful, GANs are known to be notoriously hard to train. The standard strategy for stabilizing training is to carefully design the model, either by adapting the architecture \cite{radford2015unsupervised} or by selecting an easy-to-optimize objective function \cite{salimans2016improved,arjovsky2017wasserstein,gulrajani2017improved}.

In this work, we examine the general problem of finding local Nash-equilibria of smooth games. We revisit the de-facto standard algorithm for finding such equilibrium points, simultaneous gradient ascent. We theoretically show that the main factors preventing the algorithm from converging are the presence of eigenvalues of the Jacobian of the associated gradient vector field with zero real-part and eigenvalues with a large imaginary part. The presence of the latter is also one of the reasons that make saddle-point problems more difficult than local optimization problems. Utilizing these insights, we design a new algorithm that overcomes some of these problems. Experimentally, we show that our algorithm leads to stable training on many GAN architectures, including some that are known to be hard to train.

Our technique is orthogonal to strategies that try to make the GAN-game well-defined, e.g. by adding instance noise \cite{sonderby2016amortised} or by using the Wasserstein-divergence \cite{arjovsky2017wasserstein,gulrajani2017improved}: while these strategies try to ensure the existence of Nash-equilibria, our paper deals with their computation and the numerical difficulties that can arise in practice.

In summary, our contributions are as follows:
\begin{compactitem}
 \item We identify the main reasons why simultaneous gradient ascent often fails to find local Nash-equilibria.
 \item By utilizing these insights, we design a new, more robust algorithm for finding  Nash-equilibria of smooth two-player games.
 \item We empirically demonstrate that our method enables stable training of GANs on a variety of architectures and divergence measures.
\end{compactitem}

The proofs for the theorems in this paper can be found the supplementary material.\footnote{The code for all experiments in this paper is available under \url{https://github.com/LMescheder/TheNumericsOfGANs}.}

\section{Background}
In this section we first revisit the concept of Generative Adversarial Networks (GANs) from a divergence minimization point of view. We then introduce the concept of a smooth (non-convex) two-player game and define the terminology used in the rest of the paper. Finally, we describe simultaneous gradient ascent, the de-facto standard algorithm for finding Nash-equilibria of such games, and derive some of its properties.

\subsection{Divergence Measures and GANs}\label{sec:divergence-gan}
Generative Adversarial Networks are best understood in the context of divergence minimization: assume we are given a divergence function $D$, i.e. a function that takes a pair of probability distributions as input, outputs an element from $[0, \infty]$ and satisfies $D(p, p) = 0$ for all probability distributions $p$. Moreover, assume we are given some target distribution $p_0$ from which we can draw i.i.d. samples and a parametric family of distributions $q_\theta$ that also allows us to draw i.i.d. samples. In practice $q_\theta$ is usually implemented as a neural network that acts on a hidden code $z$ sampled from some known distribution and outputs an element from the target space. Our goal is to find $\bar \theta$ that minimizes the divergence $D(p_0, q_\theta)$, i.e. we want to solve the optimization problem
\begin{equation}\label{eq:divergence-min-prblm}
 \min_\theta D(p_0, q_\theta).
\end{equation}

Most divergences that are used in practice can be represented in the following form \cite{goodfellow2014generative,nowozin2016f,arjovsky2017wasserstein}:
\begin{equation}
 D(p, q) = \max_{f\in \mathcal F}  \E_{x\sim q} \left[ g_1(f(x)) \right] - \E_{x \sim p} \left[ g_2(f(x))\right] 
\end{equation}
for some function class $\mathcal F \subseteq \mathcal{X} \to \mathbb R$ and convex functions $g_1, g_2: \mathbb R \to \mathbb R$. Together with \eqref{eq:divergence-min-prblm}, this leads to mini-max problems of the form
\begin{equation}\label{eq:mini-max-game}
 \min_\theta \max_{f\in \mathcal F}  \E_{x\sim q_\theta} \left[ g_1(f(x)) \right] - \E_{x \sim p_0} \left[ g_2(f(x))\right] .
\end{equation}
These divergences include the Jensen-Shannon divergence \cite{goodfellow2014generative}, all f-divergences \cite{nowozin2016f}, the Wasserstein divergence \cite{arjovsky2017wasserstein} and even the indicator divergence, which is $0$ if $p = q$ and $\infty$ otherwise.

In practice, the function class $\mathcal F$ in \eqref{eq:mini-max-game} is approximated with a parametric family of functions, e.g. parameterized by a neural network. Of course, when minimizing the divergence w.r.t. this approximated family, we no longer minimize the correct divergence. However, it can be verified that taking any class of functions in \eqref{eq:mini-max-game} leads to a divergence function for appropriate choices of $g_1$ and $g_2$. Therefore, some authors call these divergence functions \emph{neural network divergences} \cite{arora2017generalization}.

\subsection{Smooth Two-Player Games}
A differentiable two-player game is defined by two utility functions $f(\phi, \theta)$ and $g(\phi, \theta)$ defined over a common space $(\phi, \theta) \in \Omega_1 \times \Omega_2$. $\Omega_1$ corresponds to the possible actions of player 1, $\Omega_2$ corresponds to the possible actions of player 2. The goal of player 1 is to maximize $f$, whereas player 2 tries to maximize $g$. In the context of GANs, $\Omega_1$ is the set of possible parameter values for the generator, whereas $\Omega_2$ is the set of possible parameter values for the discriminator. We call a game a zero-sum game if $f = -g$. Note that the derivation of the GAN-game in Section \ref{sec:divergence-gan} leads to a zero-sum game, whereas in practice people usually employ a variant of this formulation that is not a zero-sum game for better convergence \cite{goodfellow2014generative}.

Our goal is to find a Nash-equilibrium of the game, i.e. a point $\bar x = (\bar \phi, \bar \theta)$  given by the two conditions
\begin{equation}\label{eq:Nash-condition}
 \bar \phi \in \argmax_\phi f(\phi, \bar \theta)
\quad \text{and} \quad
 \bar \theta \in \argmax_\theta g(\bar \phi, \theta).
\end{equation}
We call a point $(\bar \phi, \bar \theta)$ a local Nash-equilibrium, if \eqref{eq:Nash-condition} holds in a local neighborhood of $(\bar \phi, \bar \theta)$.

Every differentiable two-player game defines a vector field
\begin{equation}
 v(\phi, \theta) =
\begin{pmatrix}
\nabla_\phi f(\phi, \theta) \\
\nabla_\theta g(\phi, \theta)
\end{pmatrix}.
\end{equation}
We call $v$ the \emph{associated gradient vector field} to the game defined by $f$ and $g$.

For the special case of zero-sum two-player games, we have $g = -f$
and thus
\begin{equation}
 v^\prime(\phi, \theta)
 = \begin{pmatrix}
    \nabla^2_\phi f(\phi, \theta) & 
    \nabla_{\phi,\theta} f(\phi, \theta) \\
     -\nabla_{\phi,\theta} f(\phi, \theta) &
      -\nabla^2_\theta f(\phi, \theta) & 
   \end{pmatrix}.
\end{equation}

As a direct consequence, we have the following:
\begin{restatable}{lemma}{lemmacharacterizationzerosum}\label{lemma:characterization-zero-sum}
 For zero-sum games, $v^\prime(x)$ is negative (semi-)definite if and only if $\nabla_\phi^2 f(\phi, \theta)$ is negative (semi-)definite and $\nabla_\theta^2 f(\phi, \theta)$ is positive (semi-)definite.
\end{restatable}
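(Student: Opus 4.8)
The plan is to reduce the claim about the (in general non-symmetric) matrix $v^\prime(x)$ to a statement about its symmetric part. Recall the standard convention that, for a real square matrix $A$, the quadratic form $\xi \mapsto \xi^\T A \xi$ depends only on the symmetric part $\tfrac{1}{2}(A + A^\T)$, and that $A$ is negative (semi-)definite precisely when this quadratic form is negative (resp. non-positive) for all $\xi \neq 0$. So I would first write $v^\prime(x)$ in the block form displayed above: diagonal blocks $\nabla_\phi^2 f$ and $-\nabla_\theta^2 f$, and off-diagonal blocks $B$ and $-B^\T$ with $B = \nabla_{\phi,\theta} f$ (the $(2,1)$ block being the Jacobian of $-\nabla_\theta f$ with respect to $\phi$; here one uses that $f$ is $C^2$, so the mixed Hessian blocks are transposes of each other by Schwarz's theorem).

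Next I would compute the symmetric part. The diagonal blocks are already symmetric, and the two off-diagonal blocks cancel upon symmetrisation since one is the negative transpose of the other. Hence
\[
 \tfrac{1}{2}\bigl(v^\prime(x) + v^\prime(x)^\T\bigr)
 = \begin{pmatrix} \nabla_\phi^2 f(\phi,\theta) & 0 \\ 0 & -\nabla_\theta^2 f(\phi,\theta) \end{pmatrix},
\]
a block-diagonal symmetric matrix.

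Finally I would invoke the elementary fact that a block-diagonal symmetric matrix is negative (semi-)definite if and only if each diagonal block is: writing $\xi = (\xi_1,\xi_2)$, the quadratic form splits as $\xi_1^\T \nabla_\phi^2 f\, \xi_1 - \xi_2^\T \nabla_\theta^2 f\, \xi_2$, and setting $\xi_2 = 0$ (resp. $\xi_1 = 0$) isolates each summand while the full form is their sum. Thus $v^\prime(x)$ is negative (semi-)definite iff $\nabla_\phi^2 f$ is negative (semi-)definite and $-\nabla_\theta^2 f$ is negative (semi-)definite, i.e.\ iff $\nabla_\theta^2 f$ is positive (semi-)definite, which is the assertion.

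There is no genuine obstacle here; the statement is essentially immediate once the right viewpoint is taken. The only point deserving a moment's care is the convention that definiteness of the non-symmetric matrix $v^\prime(x)$ refers to its symmetric part, together with the observation that the skew-symmetric off-diagonal coupling — precisely the feature distinguishing a true two-player game from plain optimisation — contributes nothing to that symmetric part, and hence nothing to this particular criterion.
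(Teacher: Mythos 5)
Your proposal is correct and takes essentially the same approach as the paper: both arguments reduce to the observation that the quadratic form $w^\T v^\prime(x)\,w$ splits as $w_1^\T \nabla_\phi^2 f\, w_1 - w_2^\T \nabla_\theta^2 f\, w_2$ because the skew off-diagonal coupling cancels, and then test on vectors with $w_1=0$ or $w_2=0$. The paper computes the quadratic form directly rather than first isolating the symmetric part, but that is a cosmetic difference, not a different route.
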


\begin{restatable}{corollary}{corcharacterizationzerosum}\label{cor:characterization-zero-sum}
 For zero-sum games, $v^\prime(\bar x)$ is negative semi-definite for any local Nash-equilibrium $\bar x$. Conversely, if $\bar x$ is a stationary point of $v(x)$ and $v^\prime(\bar x)$ is negative definite, then $\bar x$ is a local Nash-equilibrium.
\end{restatable}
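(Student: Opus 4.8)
The plan is to derive the corollary directly from Lemma~\ref{lemma:characterization-zero-sum} together with the first- and second-order optimality conditions of unconstrained smooth optimization. Throughout, write $\bar x = (\bar\phi, \bar\theta)$ and recall that in the zero-sum case $g = -f$, so that $\nabla_\theta g = -\nabla_\theta f$ and $\nabla_\theta^2 g = -\nabla_\theta^2 f$.

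For the first assertion, suppose $\bar x$ is a local Nash-equilibrium. By definition $\bar\phi$ is a local maximizer of $\phi \mapsto f(\phi, \bar\theta)$, so the second-order necessary condition for a local maximum gives that $\nabla_\phi^2 f(\bar\phi, \bar\theta)$ is negative semi-definite. Likewise $\bar\theta$ is a local maximizer of $\theta \mapsto g(\bar\phi, \theta) = -f(\bar\phi, \theta)$, i.e.\ a local minimizer of $\theta \mapsto f(\bar\phi, \theta)$, so $\nabla_\theta^2 f(\bar\phi, \bar\theta)$ is positive semi-definite. Lemma~\ref{lemma:characterization-zero-sum} then immediately yields that $v^\prime(\bar x)$ is negative semi-definite.

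For the converse, assume $v(\bar x) = 0$ and $v^\prime(\bar x)$ is negative definite. Stationarity unpacks to $\nabla_\phi f(\bar\phi,\bar\theta) = 0$ and $\nabla_\theta f(\bar\phi,\bar\theta) = 0$, and Lemma~\ref{lemma:characterization-zero-sum} applied in the definite case gives that $\nabla_\phi^2 f(\bar\phi,\bar\theta)$ is negative definite while $\nabla_\theta^2 f(\bar\phi,\bar\theta)$ is positive definite. These are exactly the second-order sufficient conditions for $\bar\phi$ to be a strict local maximizer of $f(\cdot, \bar\theta)$ and for $\bar\theta$ to be a strict local minimizer of $f(\bar\phi, \cdot)$, hence a strict local maximizer of $g(\bar\phi, \cdot)$. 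Both conditions in~\eqref{eq:Nash-condition} therefore hold on a neighborhood of $\bar x$, so $\bar x$ is a local Nash-equilibrium.

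The only point that requires care — and the reason the two halves of the statement are not perfectly symmetric — is the gap between the second-order \emph{necessary} condition (which yields only semi-definiteness, hence the weaker conclusion in the first part) and the second-order \emph{sufficient} condition (which requires strict definiteness, hence the extra hypothesis in the converse). No constrained- or boundary-optimality subtleties enter, since the Nash conditions are local and the utilities are smooth on an open product domain; beyond invoking these standard calculus facts, the argument is essentially a restatement of Lemma~\ref{lemma:characterization-zero-sum}.
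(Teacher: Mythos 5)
Your proof is correct and follows essentially the same route as the paper's: both directions are read off from Lemma~\ref{lemma:characterization-zero-sum} combined with the standard second-order necessary (semi-definite) and sufficient (definite) optimality conditions for unconstrained local maxima/minima. You simply spell out the calculus facts that the paper leaves implicit, and your remark on the necessary/sufficient asymmetry is a fair observation; as a minor aside, the paper's printed proof has a typo (it concludes ``negative definite'' in the first direction where it should be ``negative semi-definite''), which your version gets right.
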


Note that Corollary \ref{cor:characterization-zero-sum} is not true for general two-player games.

\subsection{Simultaneous Gradient Ascent}

\begin{algorithm}[t!]
\caption{Simultaneous Gradient Ascent (SimGA)}
\label{alg:simga}
\begin{algorithmic}[1]
   \WHILE{not converged}
     \STATE $v_\phi \gets \nabla_\phi f(\theta, \phi)$
     \STATE $v_\theta \gets \nabla_\theta g(\theta, \phi)$
     \STATE $\phi \gets \phi + h v_\phi$ 
     \STATE $\theta \gets \theta + h v_\theta$
   \ENDWHILE
\end{algorithmic}
\end{algorithm}

The de-facto standard algorithm for finding Nash-equilibria of general smooth two-player games is Simultaneous Gradient Ascent (SimGA), which was described in several works, for example in \cite{ratliff2013characterization} and, more recently also in the context of GANs, in \cite{nowozin2016f}. The idea is simple and is illustrated in Algorithm~\ref{alg:simga}. We iteratively update the parameters of the two players by simultaneously applying gradient ascent to the utility functions of the two players. This can also be understood as applying the Euler-method to the ordinary differential equation
\begin{equation}
 \frac{\mathrm d}{\mathrm d t} x(t) = v(x(t)),
\end{equation}
where $v(x)$ is the associated gradient vector field of the two-player game.

It can be shown that simultaneous gradient ascent converges locally to a Nash-equilibrium  for a zero-sum game, if the Hessian of both players is negative definite \cite{nowozin2016f,ratliff2013characterization} and the learning rate is small enough. Unfortunately, in the context of GANs the former condition is rarely met.
We revisit the properties of simultaneous gradient ascent in Section \ref{sec:theory} and also show a more subtle property, namely that even if the conditions for the convergence of simultaneous gradient ascent are met, it might require extremely small step sizes for convergence if the Jacobian of the associated gradient vector field has eigenvalues with large imaginary part.

\section{Convergence Theory}\label{sec:theory}
In this section, we analyze the convergence properties of the most common method for training GANs, simultaneous gradient ascent\footnote{A similar analysis of alternating gradient ascent, a popular alternative to simultaneous gradient ascent, can be found in the supplementary material.
}. We show that two major failure causes for this algorithm are eigenvalues of the Jacobian of the associated gradient vector field with zero real-part as well as eigenvalues with large imaginary part.

For our theoretical analysis, we start with the following classical theorem about the convergence of fixed-point iterations:
\begin{restatable}{proposition}{propfixedpointtheorem}\label{prop:fixed-point-theorem}
Let $F: \Omega \rightarrow \Omega$ be a continuously differential function on an open subset $\Omega$ of $\mathbb R^n$ and let $\bar x \in \Omega$ be so that
\begin{enumerate}
  \item $F(\bar x) = \bar x$, and
  \item the absolute values of the eigenvalues of the Jacobian $F^\prime(\bar x)$ are all smaller than 1.
\end{enumerate}
Then there is an open neighborhood $U$ of $\bar x$ so that for all $x_0 \in U$, the iterates $F^{(k)}(x_0)$ converge to $\bar x$. The rate of convergence is at least linear. More precisely, the error $ \|F^{(k)}(x_0) - \bar x \|$ is in
 $\mathcal O (|\lambda_{max}|^k)$ for $k \to \infty$
where $\lambda_{max}$ is the eigenvalue of $F^\prime(\bar x)$ with the largest absolute value.
\end{restatable}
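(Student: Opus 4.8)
The plan is to reduce the problem to the behavior of $F$ near its fixed point by linearizing and then controlling the nonlinear remainder with a carefully chosen norm. First I would pass to an adapted norm on $\mathbb{R}^n$: since every eigenvalue $\lambda$ of $A := F'(\bar x)$ satisfies $|\lambda| < 1$, the spectral radius $\rho(A)$ is strictly less than $1$, so I can pick $\rho(A) < q < 1$ and invoke the standard fact that there exists a vector norm $\|\cdot\|_\ast$ (equivalently, an inner product, obtained e.g.\ from a near-diagonalizing change of basis or from summing $\|A^k x\|/q^k$) whose induced operator norm satisfies $\|A\|_\ast \le q$. All norms on $\mathbb{R}^n$ being equivalent, it suffices to prove convergence and the rate in $\|\cdot\|_\ast$.

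Next I would set up the local contraction estimate. Writing $F(x) = \bar x + A(x - \bar x) + r(x)$ with $r(x) = F(x) - \bar x - A(x-\bar x)$, continuity of $F'$ gives $r(\bar x) = 0$, $r'(\bar x) = 0$, hence $\|r(x)\|_\ast / \|x - \bar x\|_\ast \to 0$ as $x \to \bar x$. Fix $\varepsilon > 0$ with $q + \varepsilon < 1$ and choose a ball $U = \{x : \|x - \bar x\|_\ast < \delta\}$ on which $\|r(x)\|_\ast \le \varepsilon \|x - \bar x\|_\ast$. Then for $x \in U$,
\begin{equation}
\|F(x) - \bar x\|_\ast \le \|A(x - \bar x)\|_\ast + \|r(x)\|_\ast \le (q + \varepsilon)\|x - \bar x\|_\ast,
\end{equation}
so $F$ maps $U$ into itself and contracts distance to $\bar x$ by the factor $q + \varepsilon < 1$. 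By induction $\|F^{(k)}(x_0) - \bar x\|_\ast \le (q+\varepsilon)^k \|x_0 - \bar x\|_\ast \to 0$, which already establishes convergence with at least linear rate.

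Finally, to get the sharp rate $\mathcal{O}(|\lambda_{\max}|^k)$, I would redo the argument taking $q$ arbitrarily close to $\rho(A) = |\lambda_{\max}|$: for every $\eta > 0$ the above yields an error bound $\mathcal{O}((|\lambda_{\max}| + \eta)^k)$, so $\limsup_k \|F^{(k)}(x_0) - \bar x\|_\ast^{1/k} \le |\lambda_{\max}|$, which is exactly the stated $\mathcal{O}$-claim up to the usual sub-exponential slack (and is literally $\mathcal{O}(|\lambda_{\max}|^k)$ when $A$ is diagonalizable or more generally when the eigenvalue of maximal modulus is non-defective, the generic case). The main obstacle is the adapted-norm step: one must justify existence of a norm in which $\|A\|_\ast$ is as close to $\rho(A)$ as desired even when $A$ is not diagonalizable (Jordan blocks force $\|A\| \ge \rho(A)$ but allow equality only in the limit), and then be careful that the constant $\delta$ defining $U$ depends on the chosen $q$, so the neighborhood shrinks as one sharpens the rate — this is why the clean statement is phrased asymptotically. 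Everything else is a routine linearization-plus-induction argument.
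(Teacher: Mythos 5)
The paper provides no proof of this proposition: it cites Bertsekas, \emph{Nonlinear Programming}, Proposition~4.4.1. Your argument is a self-contained reconstruction of that classical result (the local contraction estimate via an adapted norm, essentially Ostrowski's theorem), and it is correct. You also correctly flag the one genuine subtlety that the paper's phrasing glosses over: the literal $\mathcal O(|\lambda_{\max}|^k)$ rate holds only when the eigenvalue of maximal modulus is non-defective; if $F'(\bar x)$ has a Jordan block of size $m>1$ at that eigenvalue, even the purely linear iteration produces a $k^{m-1}|\lambda_{\max}|^k$ growth, so in general one can only conclude $\mathcal O((|\lambda_{\max}|+\eta)^k)$ for every $\eta>0$, with the neighborhood $U$ (and hence the implicit constant) depending on $\eta$. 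Noting that the statement should be read with this customary slack is the right call; it is exactly the form in which Bertsekas states the result.
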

\begin{restatable}{proof}{prooffixedpointtheorem}
 See \cite{bertsekas1999nonlinear}, Proposition~4.4.1.
\end{restatable}

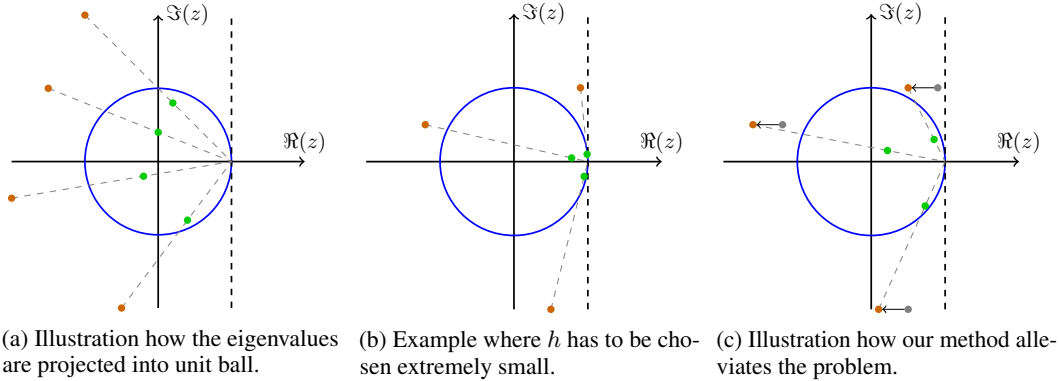
\begin{figure}
\centering
\begin{subfigure}{0.32\textwidth}
\centering
\resizebox{\linewidth}{!}{\usetikzlibrary{calc}

\begin{tikzpicture}[scale=1.2]
\def\arrowlength{0.4};
\def\pointsize{0.05}

\draw[->,thick] (-2,0) -- (2,0) node[above] {$\Re(z)$};
\draw[->,thick] (0,-2) -- (0,2) node[right] {$\Im(z)$};

\draw[blue,thick] (0,0) circle (1);
\draw[dashed,thick] (1,-2) -- (1,2);

\foreach \x/\y in {-1/2, -0.5/-2, -1.5/1, -2/-0.5} {
\draw[dashed,gray] (\x,\y) -- (1, 0);
\fill[orange!80!black] (\x,\y) circle (\pointsize);
\fill[green!80!black] ($(1,0)!\arrowlength!(\x,\y)$) circle (\pointsize);
}
\end{tikzpicture}}
\caption{Illustration how the eigenvalues are projected into unit ball.}
\end{subfigure}
\hfill
\begin{subfigure}{0.32\textwidth}
\centering
\resizebox{\linewidth}{!}{\usetikzlibrary{calc}

\begin{tikzpicture}[scale=1.2]
\def\arrowlength{0.1};
\def\pointsize{0.05}

\draw[->,thick] (-2,0) -- (2,0) node[above] {$\Re(z)$};
\draw[->,thick] (0,-2) -- (0,2) node[right] {$\Im(z)$};

\draw[blue,thick] (0,0) circle (1);
\draw[dashed,thick] (1,-2) -- (1,2);

\foreach \x/\y in {0.9/1, 0.5/-2, -1.2/0.5} {
\draw[dashed,gray] (\x,\y) -- (1, 0);
\fill[orange!80!black] (\x,\y) circle (\pointsize);
\fill[green!80!black] ($(1,0)!\arrowlength!(\x,\y)$) circle (\pointsize);
}
\end{tikzpicture}}
\caption{Example where $h$ has to be chosen extremely small.}
\end{subfigure}
\hfill
\begin{subfigure}{0.32\textwidth}
\centering
\resizebox{\linewidth}{!}{\usetikzlibrary{calc}

\begin{tikzpicture}[scale=1.2]
\def\arrowlength{0.3};
\def\pointsize{0.05}
\def\offset{0.4};

\draw[->,thick] (-2,0) -- (2,0) node[above] {$\Re(z)$};
\draw[->,thick] (0,-2) -- (0,2) node[right] {$\Im(z)$};

\draw[blue,thick] (0,0) circle (1);
\draw[dashed,thick] (1,-2) -- (1,2);

\foreach \x/\y in {0.9/1, 0.5/-2, -1.2/0.5} {
\draw[dashed,gray] (\x-\offset,\y) -- (1, 0);
\fill[gray] (\x,\y) circle (\pointsize);
\fill[orange!80!black] (\x-\offset,\y) circle (\pointsize);
\fill[green!80!black] ($(1,0)!\arrowlength!(\x-\offset,\y)$) circle (\pointsize);
\draw[black,->] (\x-0.05,\y) -- (\x-\offset+0.05, \y);
}
\end{tikzpicture}}
\caption{Illustration how our method alleviates the problem.}
\end{subfigure}
\caption{Images showing how the eigenvalues of $A$ are projected into the unit circle and what causes problems: when discretizing the gradient flow with step size $h$, the eigenvalues of the Jacobian at a fixed point are projected into the unit ball along rays from $1$. However, this is only possible if the eigenvalues lie in the left half plane and requires extremely small step sizes $h$ if the eigenvalues are close to the imaginary axis. The proposed method moves the eigenvalues to the left in order to make the problem better posed, thus allowing the algorithm to converge for reasonable step sizes.}
\label{fig:eig-projection}
\vspace{-0.5cm}
\end{figure}

In numerics, we often consider functions of the form
\begin{equation}\label{eq:F-incremental}
 F(x) = x + h\,G(x)
\end{equation}
for some $h > 0$. Finding fixed points of $F$ is then equivalent to finding solutions to the nonlinear equation $G(x) = 0$ for $x$. For $F$ as in $\eqref{eq:F-incremental}$, the Jacobian is given by
\begin{equation}
 F^\prime(x) = I + h\,G^\prime (x).
\end{equation}
Note that in general neither $F^\prime(x)$ nor $G^\prime(x)$ are symmetric and can therefore have complex eigenvalues.

The following Lemma gives an easy condition, when a fixed point of $F$ as in \eqref{eq:F-incremental} satisfies the conditions of Proposition \ref{prop:fixed-point-theorem}.
\begin{restatable}{lemma}{lemmaevprojection}\label{lemma:ev-projection}
 Assume that $A \in \mathbb R^{n \times n}$ only has eigenvalues with negative real-part and let $h>0$. Then the eigenvalues of the matrix $I + h\,A$
 lie in the unit ball if and only if
 \begin{equation}\label{eq:maximum-stepsize}
  h < 
  \frac{1}{|\Re(\lambda)|} \, 
  \frac{2}{1 + \left(\tfrac{\Im(\lambda)}{\Re(\lambda)}\right)^2}
 \end{equation}
  for all eigenvalues $\lambda$ of $A$.
\end{restatable}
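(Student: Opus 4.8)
The plan is to reduce the spectral condition on $I + hA$ to a scalar inequality, one eigenvalue at a time. First I would note that if $v$ is an eigenvector of $A$ with eigenvalue $\lambda$, then $(I + hA)v = (1 + h\lambda)v$, so the eigenvalues of $I + hA$ are exactly the numbers $1 + h\lambda$ as $\lambda$ ranges over the spectrum of $A$. Consequently the eigenvalues of $I + hA$ lie in the (open) unit ball if and only if $|1 + h\lambda| < 1$ for every eigenvalue $\lambda$ of $A$, and it suffices to understand this condition for a single $\lambda$.

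Next I would expand the squared modulus. Writing $\lambda = \Re(\lambda) + i\,\Im(\lambda)$,
\begin{equation*}
 |1 + h\lambda|^2 = (1 + h\,\Re(\lambda))^2 + (h\,\Im(\lambda))^2 = 1 + 2h\,\Re(\lambda) + h^2\,|\lambda|^2 .
\end{equation*}
Hence $|1 + h\lambda| < 1$ is equivalent to $2h\,\Re(\lambda) + h^2\,|\lambda|^2 < 0$. Since $h > 0$, dividing by $h$ preserves the inequality and yields the equivalent condition $h\,|\lambda|^2 < -2\,\Re(\lambda)$; because the hypothesis forces $\Re(\lambda) < 0$, the right-hand side is $2\,|\Re(\lambda)| > 0$, so this reads $h < 2\,|\Re(\lambda)|/|\lambda|^2$.

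Finally I would rewrite this bound in the claimed form. Using $\Re(\lambda) \neq 0$ (again from the hypothesis), I factor $|\lambda|^2 = \Re(\lambda)^2 + \Im(\lambda)^2 = \Re(\lambda)^2\big(1 + (\Im(\lambda)/\Re(\lambda))^2\big)$, so
\begin{equation*}
 \frac{2\,|\Re(\lambda)|}{|\lambda|^2} = \frac{1}{|\Re(\lambda)|}\,\frac{2}{1 + \left(\tfrac{\Im(\lambda)}{\Re(\lambda)}\right)^2},
\end{equation*}
which is exactly the right-hand side of \eqref{eq:maximum-stepsize}. Collecting the single-eigenvalue equivalences over all $\lambda$ gives both directions of the stated equivalence; equivalently, all eigenvalues of $I + hA$ lie in the unit ball precisely when $h$ is smaller than the minimum over the eigenvalues of the right-hand side of \eqref{eq:maximum-stepsize}.

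I do not expect a genuine obstacle here: the argument is elementary. The only points needing a word of care are that dividing the inequality by $h$ does not flip its direction (valid since $h > 0$), that $\Re(\lambda) \neq 0$ — needed before one may divide by it or form $(\Im(\lambda)/\Re(\lambda))^2$ — which is supplied by the assumption that every eigenvalue has strictly negative real part, and that the final ``if and only if'' is genuinely quantified over all eigenvalues of $A$.
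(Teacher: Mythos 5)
Your proof is correct and follows essentially the same route as the paper's: expand $|1+h\lambda|^2$, observe the resulting inequality in $h$, and algebraically rewrite the bound in terms of $|\Re(\lambda)|$ and $\Im(\lambda)/\Re(\lambda)$. You are slightly more explicit than the paper in spelling out the spectral mapping $\lambda \mapsto 1 + h\lambda$ and the need for $\Re(\lambda) \neq 0$, but the core computation is identical.
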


\begin{restatable}{corollary}{corNashconvergence}\label{cor:nash-convergence}
If $v^\prime(\bar x)$ only has eigenvalues with negative real-part at a stationary point $\bar x$, then  Algorithm~\ref{alg:simga} is locally convergent to $\bar x$ for $h>0$ small enough.
\end{restatable}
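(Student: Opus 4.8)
The plan is to recognize Algorithm~\ref{alg:simga} as a fixed-point iteration and then chain together the two results already established. Concretely, one step of simultaneous gradient ascent is the map $F(x) = x + h\,v(x)$ with $v$ the associated gradient vector field, so SimGA computes the iterates $F^{(k)}(x_0)$. Since $\bar x$ is a stationary point of $v$, we have $v(\bar x) = 0$ and hence $F(\bar x) = \bar x$, so condition~1 of Proposition~\ref{prop:fixed-point-theorem} holds. Because the game is smooth, $v$ is continuously differentiable on $\Omega_1 \times \Omega_2$, and therefore so is $F$; moreover $F^\prime(\bar x) = I + h\,v^\prime(\bar x)$, exactly the form \eqref{eq:F-incremental} analyzed in Lemma~\ref{lemma:ev-projection}.

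Next I would apply Lemma~\ref{lemma:ev-projection} with $A = v^\prime(\bar x)$. By hypothesis all eigenvalues of $A$ have strictly negative real part, so each $|\Re(\lambda)| > 0$ and the right-hand side of \eqref{eq:maximum-stepsize} is a well-defined positive number for every eigenvalue $\lambda$. Since $A$ is an $n \times n$ matrix it has only finitely many eigenvalues, so
\begin{equation}
 h^\ast := \min_{\lambda} \frac{1}{|\Re(\lambda)|}\,\frac{2}{1 + \left(\tfrac{\Im(\lambda)}{\Re(\lambda)}\right)^2} > 0,
\end{equation}
where the minimum runs over the eigenvalues of $A$. For any $h \in (0, h^\ast)$, Lemma~\ref{lemma:ev-projection} guarantees that all eigenvalues of $F^\prime(\bar x) = I + h\,v^\prime(\bar x)$ lie strictly inside the unit ball, which is precisely condition~2 of Proposition~\ref{prop:fixed-point-theorem}.

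Finally, Proposition~\ref{prop:fixed-point-theorem} then yields an open neighborhood $U$ of $\bar x$ such that $F^{(k)}(x_0) \to \bar x$ for every $x_0 \in U$, with at least linear rate controlled by the spectral radius of $F^\prime(\bar x)$; this is exactly the asserted local convergence of Algorithm~\ref{alg:simga} for $h$ small enough. I do not anticipate a genuine obstacle here: the argument is a routine composition of the preceding lemmas. The only points needing a word of care are (i) noting that strict negativity of the real parts is what makes the step-size bound $h^\ast$ strictly positive (a zero real-part eigenvalue would make the bound degenerate, which is why that case is excluded — and is discussed separately in the paper), and (ii) invoking $C^1$-smoothness of $v$ so that Proposition~\ref{prop:fixed-point-theorem} applies to $F$.
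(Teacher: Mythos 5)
Your proof is correct and follows exactly the route the paper intends: the paper's own proof is the one-line remark that the corollary ``is a direct consequence of Proposition~\ref{prop:fixed-point-theorem} and Lemma~\ref{lemma:ev-projection},'' and you have simply spelled out that chain (identifying SimGA with $F(x)=x+h\,v(x)$, applying Lemma~\ref{lemma:ev-projection} with $A=v^\prime(\bar x)$, and invoking Proposition~\ref{prop:fixed-point-theorem}). Nothing more is needed.
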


Equation \ref{eq:maximum-stepsize} shows that there are two major factors that determine the maximum possible step size $h$: (i) the maximum value of $\Re(\lambda)$ and (ii) the maximum value $q$ of $\left|{\Im(\lambda)}/{\Re(\lambda)}\right|$. Note that as $q$ goes to infinity, we have to choose $h$ according to $\mathcal O(q^{-2})$  which can quickly become extremely small. This is visualized in Figure~\ref{fig:eig-projection}: if $G^\prime(\bar x)$ has an eigenvalue with small absolute real part but big imaginary part, $h$ needs to be chosen extremely small to still achieve convergence. Moreover, even if we make $h$ small enough, most eigenvalues of $F^\prime(\bar x)$ will be very close to $1$,  which leads by Proposition \ref{prop:fixed-point-theorem} to very slow convergence of the algorithm.  This is in particular a problem of simultaneous gradient ascent for two-player games (in contrast to gradient ascent for local optimization), where the Jacobian $G^\prime(\bar x)$ is not symmetric and can therefore have non-real eigenvalues.  

\section{Consensus Optimization}\label{sec:method}
In this section, we derive the proposed method and analyze its convergence properties.

\subsection{Derivation}\label{sec:consensus-deriv}
Finding stationary points of the vector field $v(x)$ is equivalent to solving the equation $v(x) = 0$. In the context of two-player games this means solving the two equations
\begin{equation}
 \nabla_\phi f(\phi, \theta) = 0
 \quad \text{and} \quad
 \nabla_\theta g(\phi, \theta) = 0.
\end{equation}
A simple strategy for finding such stationary points is to minimize
$L(x) = \tfrac{1}{2} \|v(x)\|^2$ for $x$. Unfortunately, this can result in unstable stationary points of $v$ or other local minima of $\tfrac{1}{2}\|v(x)\|^2$ and in practice, we found it did not work well.

We therefore consider a modified vector field $w(x)$ that is as close as possible to the original vector field $v(x)$, but at the same time still minimizes $L(x)$ (at least locally).
A sensible candidate for such a vector field is
\begin{equation}
 w(x) = v(x) - \gamma \nabla L(x)
\end{equation}
for some $\gamma > 0$. A simple calculation shows that the gradient $\nabla L(x)$ is given by
\begin{equation}
\nabla L(x) = v^\prime(x)^\T v(x).
\end{equation}

This vector field is the gradient vector field associated to the modified two-player game given by the two modified utility functions
\begin{equation}
 \tilde f(\phi, \theta) = f(\phi, \theta) - \gamma L(\phi, \theta)
 \quad \text{and} \quad
 \tilde g(\phi, \theta) = g(\phi, \theta) - \gamma L(\phi, \theta).
\end{equation}
The regularizer $L(\phi, \theta)$ encourages agreement between the two players.  Therefore we call the resulting algorithm \emph{Consensus Optimization} (Algorithm \ref{alg:consensus}).
\footnote{This algorithm requires backpropagation through the squared norm of the gradient with respect to the weights of the network. This is sometimes called \emph{double backpropagation} and is for example supported by the deep learning frameworks Tensorflow \cite{abadi2016tensorflow} and PyTorch \cite{paszke2017pytorch}.}
\footnote{As was pointed out by Ferenc Huzs\'ar in one of his blog posts on \url{www.inference.vc}, naively implementing this algorithm in a mini-batch setting leads to biased estimates of $L(x)$. However, the bias goes down linearly with the batch size, which justifies the usage of consensus optimization in a mini-batch setting. Alternatively, it is possible to debias the estimate by subtracting a multiple of the sample variance of the gradients, see the supplementary material for details.
}

\begin{algorithm}[t!]
\caption{Consensus optimization\label{alg:consensus}}
\label{alg:avb}
\begin{algorithmic}[1]
   \WHILE{not converged}
     \STATE $v_\phi \gets \nabla_\phi (f(\theta, \phi) - \gamma  L(\theta, \phi))$
     \STATE $v_\theta \gets \nabla_\theta (g(\theta, \phi) - \gamma  L(\theta, \phi))$
     \STATE $\phi \gets \phi + h v_\phi$
     \STATE $\theta \gets \theta + h v_\theta$
   \ENDWHILE
\end{algorithmic}
\end{algorithm}

\subsection{Convergence}\label{sec:convergence}
For analyzing convergence, we consider a more general algorithm than in Section~\ref{sec:consensus-deriv} which is given by iteratively applying a function $F$ of the form
\begin{equation}\label{eq:transformed-vf}
 F(x) = x + h\, A(x)v(x).
\end{equation}
for some step size $h> 0$ and an invertible matrix $A(x)$ to $x$. Consensus optimization is a special case of this algorithm for $A(x) = I - \gamma\,v^\prime(x)^\T$. We assume that $\tfrac{1}{\gamma}$ is not an eigenvalue of $v^\prime(x)^\T$ for any $x$, so that $A(x)$ is indeed invertible.
\begin{restatable}{lemma}{lemmageneralprecondconvergence}\label{lemma:general-precond-convergence}
 Assume $h > 0$ and $A(x)$ invertible for all $x$. Then $\bar x$ is a fixed point of \eqref{eq:transformed-vf} if and only if it is a stationary point of $v$. Moreover, if $\bar x$ is a stationary point of $v$, we have
 \begin{equation}
  F^\prime (\bar x) = I + h A(\bar x) v^\prime(\bar x).
 \end{equation}
\end{restatable}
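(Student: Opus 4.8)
The plan is to treat the two assertions separately, since both reduce to short computations once the hypotheses on $h$ and $A$ are used.

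\textbf{Fixed points.} Since $h>0$, the equation $F(\bar x)=\bar x$ is equivalent to $A(\bar x)\,v(\bar x)=0$. Because $A(\bar x)$ is invertible by assumption, left-multiplying by $A(\bar x)^{-1}$ shows this holds if and only if $v(\bar x)=0$, i.e. if and only if $\bar x$ is a stationary point of $v$. This establishes the first claim.

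\textbf{Jacobian at a stationary point.} Write $G(x):=A(x)\,v(x)$, so that $F(x)=x+h\,G(x)$ and hence $F^\prime(x)=I+h\,G^\prime(x)$ by linearity of the derivative. It then suffices to compute $G^\prime(\bar x)$ when $v(\bar x)=0$. Componentwise $G_i(x)=\sum_j A_{ij}(x)\,v_j(x)$, so the product rule gives
\[
 \partial_k G_i(x) = \sum_j \bigl(\partial_k A_{ij}(x)\bigr)\,v_j(x) \;+\; \sum_j A_{ij}(x)\,\partial_k v_j(x).
\]
Evaluating at $\bar x$ and using $v(\bar x)=0$, the first sum vanishes for every $i,k$, leaving $\partial_k G_i(\bar x)=\sum_j A_{ij}(\bar x)\,\partial_k v_j(\bar x)$, that is, $G^\prime(\bar x)=A(\bar x)\,v^\prime(\bar x)$. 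Therefore $F^\prime(\bar x)=I+h\,A(\bar x)\,v^\prime(\bar x)$, as claimed.

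The only step that deserves attention is the vanishing of the $\partial_k A_{ij}$ term, which is precisely where the hypothesis that $\bar x$ is stationary (so $v(\bar x)=0$) is used: it is what makes $F^\prime(\bar x)$ independent of the derivative of the preconditioner $A$. Implicitly we also use that $A$ and $v$ are differentiable at $\bar x$, which holds in the consensus-optimization case since there $A(x)=I-\gamma\,v^\prime(x)^\T$ is smooth whenever $v$ is $C^2$. Beyond this there is no real obstacle; the lemma is essentially a product-rule computation combined with the invertibility of $A$.
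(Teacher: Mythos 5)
Your proof is correct and follows essentially the same route as the paper's: you establish the fixed-point equivalence via invertibility of $A(\bar x)$, then compute $F^\prime(\bar x)$ by the product rule and use $v(\bar x)=0$ to drop the term involving derivatives of $A$. The only cosmetic difference is that you write the product rule componentwise, whereas the paper phrases it in terms of the $i^{th}$ partial derivative column of the Jacobian.
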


\begin{restatable}{lemma}{lemmageneralconvergence}\label{lemma:general-convergence}
 Let $A(x) = I - \gamma v^\prime(x)^\T$
 and assume that $v^\prime(\bar x)$ is negative semi-definite and invertible\footnote{Note that $v^\prime(\bar x)$ is usually not symmetric and therefore it is possible that $v^\prime(\bar x)$ is negative semi-definite and invertible but not negative-definite.}
 . Then $A(\bar x) v^\prime(\bar x)$ is negative definite.
\end{restatable}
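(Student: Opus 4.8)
The plan is to reduce the statement to a short quadratic-form computation. Write $B := v^\prime(\bar x)$ for brevity, so that $A(\bar x) = I - \gamma B^\T$ and hence
\[
 A(\bar x)\,v^\prime(\bar x) = (I - \gamma B^\T)B = B - \gamma\,B^\T B .
\]
I would first fix the convention: since $B$ need not be symmetric, ``negative semi-definite'' here means $x^\T B x \le 0$ for every $x \in \mathbb R^n$ (equivalently, the symmetric part $\tfrac12(B + B^\T)$ is negative semi-definite), and ``negative definite'' for the product likewise refers to its real quadratic form. So the whole argument can be carried out with the map $x \mapsto x^\T M x$ and never needs to touch eigenvalues of the (non-symmetric) product.

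The core step is to evaluate the quadratic form of $B - \gamma B^\T B$ on an arbitrary $x \in \mathbb R^n$:
\[
 x^\T\!\left(B - \gamma\,B^\T B\right)x = x^\T B x - \gamma\,x^\T B^\T B x = x^\T B x - \gamma\,\|Bx\|^2 .
\]
By the negative semi-definiteness of $B$ the first summand is $\le 0$, and since $\gamma > 0$ the second summand is $\le 0$ as well; hence $x^\T\!\left(B - \gamma B^\T B\right)x \le 0$ for all $x$, i.e. $A(\bar x)v^\prime(\bar x)$ is at least negative semi-definite.

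Finally I would upgrade this to strict negativity for $x \neq 0$. If $x^\T\!\left(B - \gamma B^\T B\right)x = 0$, then both of the non-positive summands above must vanish, so in particular $\|Bx\|^2 = 0$, i.e. $Bx = 0$. But $B = v^\prime(\bar x)$ is invertible by hypothesis, which forces $x = 0$. Therefore $x^\T A(\bar x) v^\prime(\bar x)\,x < 0$ for every $x \neq 0$, which is exactly the claim that $A(\bar x)v^\prime(\bar x)$ is negative definite.

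There is no genuine obstacle here; the only points requiring care are (i) consistently using the real quadratic-form definition of (semi-)definiteness, since the matrices involved are not symmetric, and (ii) invoking the invertibility assumption on $v^\prime(\bar x)$ at precisely the place where non-positivity of the quadratic form must be promoted to strict negativity — without invertibility the product would only be negative semi-definite.
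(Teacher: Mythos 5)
Your proof is correct and matches the paper's argument: both compute the quadratic form $w^\T A(\bar x)v^\prime(\bar x)w = w^\T v^\prime(\bar x)w - \gamma\|v^\prime(\bar x)w\|^2$, bound the first term by negative semi-definiteness, and use invertibility of $v^\prime(\bar x)$ to make the second term strictly negative for $w\neq 0$. The only difference is presentational — you split the argument into a semi-definiteness step followed by an upgrade to strictness, while the paper does it in one line.
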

As a consequence of Lemma~\ref{lemma:general-precond-convergence} and Lemma~\ref{lemma:general-convergence}, we can show local convergence of our algorithm to a local Nash equilibrium:

\begin{restatable}{corollary}{corconsensusconvergence}\label{cor:consensus-convergence}
  Let $v(x)$ be the associated gradient vector field of a two-player zero-sum game and $A(x) = I - \gamma v^\prime(x)^\T$. If $\bar x$ is a local Nash-equilibrium, then there is an open neighborhood $U$ of $\bar x$ so that for all $x_0 \in U$, the iterates $F^{(k)}(x_0)$ converge to $\bar x$ for $h >0$ small enough.
\end{restatable}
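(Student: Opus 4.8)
The plan is to obtain the statement by chaining together the results already established in Sections~\ref{sec:theory} and \ref{sec:convergence}, so that essentially no new ideas are needed. First, since $\bar x$ is a local Nash-equilibrium of a zero-sum game, Corollary~\ref{cor:characterization-zero-sum} shows that $v^\prime(\bar x)$ is negative semi-definite, and in particular $\bar x$ is a stationary point of $v$. Taking $A(x) = I - \gamma\,v^\prime(x)^\T$ as in the hypothesis (and recalling the standing assumption that $\tfrac1\gamma$ is not an eigenvalue of $v^\prime(x)^\T$, so $A(x)$ is invertible), Lemma~\ref{lemma:general-convergence} then gives that $A(\bar x)\,v^\prime(\bar x)$ is negative definite, provided $v^\prime(\bar x)$ is invertible.

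The second step is to pass from ``negative definite'' (for a non-symmetric matrix) to a statement about the spectrum. I would use the standard fact that a real matrix $M$ with $x^\T M x < 0$ for all $x \neq 0$ — equivalently, with negative definite symmetric part $\tfrac12(M + M^\T)$ — has all eigenvalues with strictly negative real part: for an eigenvalue $\lambda$ with unit eigenvector $z \in \mathbb C^n$, $\Re(\lambda) = \Re(\bar z^\T M z) = \bar z^\T \big(\tfrac12(M + M^\T)\big) z < 0$, where $\bar z$ is the complex conjugate of $z$. Applying this to $M = A(\bar x)\,v^\prime(\bar x)$ shows that all eigenvalues of $A(\bar x)\,v^\prime(\bar x)$ have negative real part.

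The third step is to assemble the fixed-point argument. By Lemma~\ref{lemma:general-precond-convergence}, $\bar x$ is a fixed point of $F(x) = x + h\,A(x)v(x)$ and $F^\prime(\bar x) = I + h\,A(\bar x)\,v^\prime(\bar x)$. Since the matrix $A(\bar x)\,v^\prime(\bar x)$ has only eigenvalues with negative real part, Lemma~\ref{lemma:ev-projection} (applied with $A(\bar x)\,v^\prime(\bar x)$ playing the role of $A$ there) yields an $h_0 > 0$ such that for every $0 < h < h_0$ all eigenvalues of $F^\prime(\bar x) = I + h\,A(\bar x)\,v^\prime(\bar x)$ lie strictly inside the unit ball. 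Then Proposition~\ref{prop:fixed-point-theorem} applies to $F$ at $\bar x$ and produces the claimed neighborhood $U$ together with (at least linear) convergence of $F^{(k)}(x_0)$ to $\bar x$ for all $x_0 \in U$, which is exactly the assertion.

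The one genuinely delicate point — and the place where some care (or an extra hypothesis) is required — is the invertibility of $v^\prime(\bar x)$: Corollary~\ref{cor:characterization-zero-sum} only guarantees negative \emph{semi}-definiteness, and if $v^\prime(\bar x)$ has a nontrivial kernel then $A(\bar x)\,v^\prime(\bar x)$ inherits that kernel, so $F^\prime(\bar x)$ has the eigenvalue $1$ and Proposition~\ref{prop:fixed-point-theorem} no longer applies. I therefore expect the proof to run under the (implicit) assumption that $\bar x$ is a non-degenerate equilibrium, i.e.\ $v^\prime(\bar x)$ is invertible, matching the hypothesis of Lemma~\ref{lemma:general-convergence}; with that in place, everything else is a routine composition of the cited lemmas and proposition.
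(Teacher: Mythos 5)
Your chain of results --- Corollary~\ref{cor:characterization-zero-sum} for negative semi-definiteness of $v^\prime(\bar x)$, Lemma~\ref{lemma:general-convergence} for negative definiteness of $A(\bar x)\,v^\prime(\bar x)$, the symmetric-part argument to pass from negative definiteness to eigenvalues in the open left half-plane, Lemma~\ref{lemma:general-precond-convergence} for the formula $F^\prime(\bar x)=I+h\,A(\bar x)\,v^\prime(\bar x)$, Lemma~\ref{lemma:ev-projection} for the step-size bound, and Proposition~\ref{prop:fixed-point-theorem} to conclude --- is exactly the route the paper takes; the paper's own proof merely cites Proposition~\ref{prop:fixed-point-theorem}, Lemma~\ref{lemma:general-precond-convergence} and Lemma~\ref{lemma:general-convergence} and leaves this unpacking to the reader. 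You are also right to flag the invertibility issue: Lemma~\ref{lemma:general-convergence} explicitly requires $v^\prime(\bar x)$ to be invertible, while a local Nash-equilibrium only supplies negative \emph{semi}-definiteness via Corollary~\ref{cor:characterization-zero-sum}; so the corollary as stated implicitly assumes a non-degenerate equilibrium, and if $v^\prime(\bar x)$ is singular then $F^\prime(\bar x)$ has $1$ as an eigenvalue and Proposition~\ref{prop:fixed-point-theorem} does not apply.
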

Our method solves the problem of eigenvalues of the Jacobian with (approximately) zero real-part. As the next Lemma shows, it also alleviates the problem of eigenvalues with a big imaginary-to-real-part-quotient:
\begin{restatable}{lemma}{lemmaEVquotientinequality}\label{lemma:EV-quotient-inequality}
  Assume that $A \in \mathbb R^{n \times n}$ is negative semi-definite.
  Let $q(\gamma)$ be the maximum of $\tfrac{|\Im(\lambda)|}{|\Re(\lambda)|}$ (possibly infinite) with respect to $\lambda$ where $\lambda$ denotes the eigenvalues of $A - \gamma A^\T \, A$ and $\Re(\lambda)$ and $\Im(\lambda)$ denote their real and imaginary part respectively. Moreover, assume that $A$ is invertible with $|Av| \geq \rho |v|$ for $\rho > 0$ and let
  \begin{equation}
    c = \min_{v \in \mathbb S(\mathbb C^n)} \frac{|\bar v^\T (A + A^\T) v|}{|\bar v^\T (A - A^\T) v|}
  \end{equation}
  where $\mathbb S(\mathbb C^n)$ denotes the unit sphere in $\mathbb C^n$. Then
  \begin{equation}
   q(\gamma) \leq \frac{1}{c + 2 \rho^2 \gamma}.
  \end{equation}

\end{restatable}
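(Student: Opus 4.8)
The plan is to reduce the statement to a bound on the numerical range of $B := A - \gamma A^\T A$, and then estimate the real and imaginary parts of the relevant Rayleigh quotients separately. First, every eigenvalue $\lambda$ of $B$ can be written as $\lambda = \bar v^\T B v$ for a unit eigenvector $v \in \mathbb S(\mathbb C^n)$, so
\[
 q(\gamma) \;\le\; \sup_{v \in \mathbb S(\mathbb C^n)} \frac{|\Im(\bar v^\T B v)|}{|\Re(\bar v^\T B v)|},
\]
and it suffices to bound this supremum over the whole unit sphere. (The next step shows the denominator is bounded away from $0$, so the supremum, and hence $q(\gamma)$, is finite.)

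Next I would split the Rayleigh quotient. Fix a unit vector $v$ and write $\bar v^\T A v = \alpha + i\beta$ with $\alpha,\beta\in\mathbb R$. Because $A$ is real, $\bar v^\T A^\T v = \overline{\bar v^\T A v} = \alpha - i\beta$, which yields $\bar v^\T(A+A^\T)v = 2\alpha$, $\bar v^\T(A-A^\T)v = 2i\beta$, and $\bar v^\T A^\T A v = |Av|^2 \ge 0$ (real). Hence
\[
 \Re(\bar v^\T B v) = \alpha - \gamma|Av|^2, \qquad \Im(\bar v^\T B v) = \beta .
\]
Negative semi-definiteness of $A$ forces $\alpha = \bar v^\T\tfrac{A+A^\T}{2}v \le 0$, so $|\Re(\bar v^\T B v)| = |\alpha| + \gamma|Av|^2 \ge \gamma\rho^2 > 0$, using $|Av| \ge \rho$. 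This already records the qualitative gain of the method: the regularizer is purely real in the Rayleigh quotient and pushes the real parts strictly into the left half plane.

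Finally I would feed in the two quantitative ingredients. From the identities above and the definition of $c$, $|\alpha| = \tfrac12|\bar v^\T(A+A^\T)v| \ge c\cdot\tfrac12|\bar v^\T(A-A^\T)v| = c\,|\beta|$. For the regularizer term one shows $\gamma|Av|^2 \ge 2\rho^2\gamma\,|\beta|$, combining $|Av| \ge \rho$ with the Cauchy--Schwarz estimate $|\beta| \le |\bar v^\T A v| \le |Av|$. Substituting both (and disposing of $\beta = 0$, where the quotient vanishes, separately) gives
\[
 \frac{|\Im(\bar v^\T B v)|}{|\Re(\bar v^\T B v)|} = \frac{|\beta|}{|\alpha| + \gamma|Av|^2} \;\le\; \frac{|\beta|}{c\,|\beta| + 2\rho^2\gamma\,|\beta|} = \frac{1}{c + 2\rho^2\gamma},
\]
and taking the supremum over $v$ proves the claim.

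The reduction to the numerical range and the real/imaginary split are routine — the one thing to notice is the conjugation identity $\bar v^\T A^\T v = \overline{\bar v^\T A v}$, which makes the regularizer invisible to $\Im\lambda$. I expect the delicate point to be the lower bound on the regularizer term $\gamma|Av|^2$: it must be controlled \emph{simultaneously} by the ellipticity constant $\rho$ and by $|\beta| = |\Im(\bar v^\T A v)|$, so that the $|\beta|$ cancels in the last display; without that cancellation one only gets a much weaker, essentially $\gamma$-independent bound. Pinning down the exact constant multiplying $\rho^2\gamma$ is the crux, and the boundary cases entering the definitions of $c$ (unit vectors with $\bar v^\T(A-A^\T)v = 0$) and of $q$ ($\beta = 0$) need a brief separate word.
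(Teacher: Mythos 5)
Your reduction to the Rayleigh quotient, the decomposition $\Re(\bar v^\T Bv)=\alpha-\gamma\|Av\|^2$, $\Im(\bar v^\T Bv)=\beta$, and the use of negative semi-definiteness and of the constant $c$ all match the paper's own argument. But the step you flag as the crux is a genuine gap, and it cannot be closed: the two ingredients you cite give only $\|Av\|^2 \ge \|Av\|\,|\beta| \ge \rho\,|\beta|$ (Cauchy--Schwarz followed by $\|Av\|\ge\rho$), hence $\gamma\|Av\|^2\ge\gamma\rho|\beta|$, which yields $q(\gamma)\le 1/(c+\gamma\rho)$ and \emph{not} $1/(c+2\rho^2\gamma)$. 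In fact the lemma as stated is false. Take $A=\begin{pmatrix}0&-1\\ 1&0\end{pmatrix}$, which is negative semi-definite in the required Hermitian sense (its symmetric part vanishes), with $\rho=1$ and $c=0$. Then $A-\gamma A^\T A=A-\gamma I$ has eigenvalues $-\gamma\pm i$, so $q(\gamma)=1/\gamma$, strictly larger than the claimed $1/(2\gamma)$.

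This is not your error to fix: the paper's own proof has the same hole. It reaches the display
\begin{equation*}
\frac{\bigl|\bar v^\T(A-A^\T)v\bigr|}{\bigl|\bar v^\T(A+A^\T)v\bigr|+2\gamma\|Av\|^2}
\end{equation*}
and then asserts the result ``because $\|Av\|^2\ge\rho\|v\|^2=\rho$'' (itself a slip for $\rho^2$), without exhibiting the cancellation of $|\bar v^\T(A-A^\T)v|$ against $\|Av\|^2$ that the stated bound would require. What this route can honestly deliver is $q(\gamma)\le 1/(c+\gamma\rho)$, which still decays to zero as $\gamma\to\infty$ and so preserves the lemma's intended qualitative message, just with a different constant.
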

Lemma \ref{lemma:EV-quotient-inequality} shows that the imaginary-to-real-part-quotient can be made arbitrarily small for an appropriate choice of $\gamma$.
According to Proposition~\ref{prop:fixed-point-theorem}, this leads to better convergence properties near a local Nash-equilibrium.

\section{Experiments}
\begin{figure}[t]
\centering
\begin{subfigure}[b]{0.7\textwidth}
\centering
\includegraphics[width=0.19\linewidth]{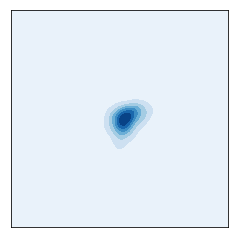}
\includegraphics[width=0.19\linewidth]{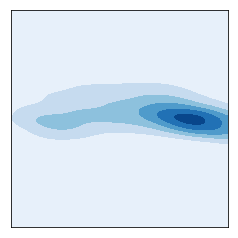}
\includegraphics[width=0.19\linewidth]{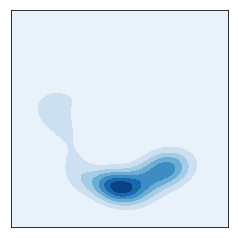}
\includegraphics[width=0.19\linewidth]{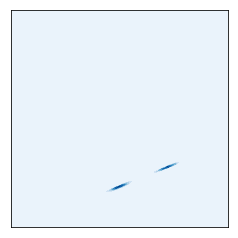}
\includegraphics[width=0.19\linewidth]{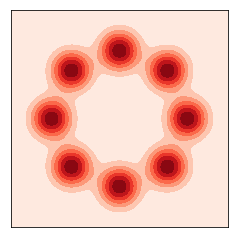}
\caption{Simultaneous Gradient Ascent}
\end{subfigure}

\begin{subfigure}[b]{0.7\textwidth}
\centering
\includegraphics[width=0.19\linewidth]{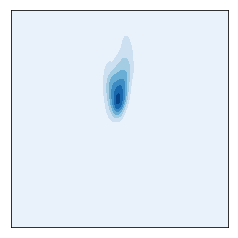}
\includegraphics[width=0.19\linewidth]{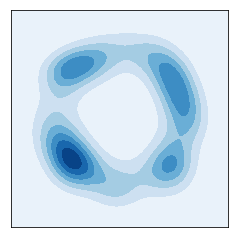}
\includegraphics[width=0.19\linewidth]{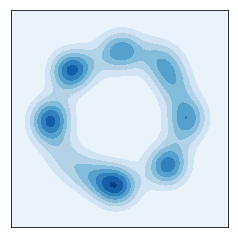}
\includegraphics[width=0.19\linewidth]{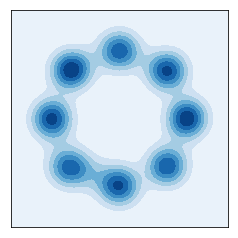}
\includegraphics[width=0.19\linewidth]{img/gauss_gt}
\caption{Consensus optimization}
\end{subfigure}

\caption{
Comparison of Simultaneous Gradient Ascent and Consensus optimization on a circular mixture of Gaussians. The images depict from left to right the resulting densities of the algorithm after $0$, $5000$, $10000$ and $20000$ iterations as well as the target density (in red).\label{fig:mix-gauss}}
\vspace{-0.2cm}
\end{figure}

\begin{figure}[h!]
  \centering
  \begin{tabular}{C{0.1\textwidth}C{0.32\textwidth}C{0.32\textwidth}}
    \rule{0pt}{4ex}   & $v^\prime(x)$ & $w^\prime(x)$ \\
    Before training
    &\includegraphics[width=\linewidth]{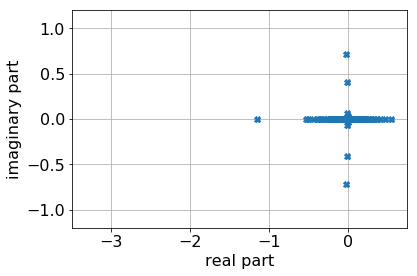}
    &\includegraphics[width=\linewidth]{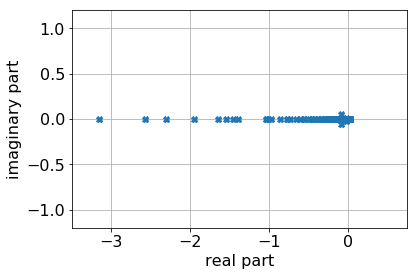}\\ 
    After training
    &\includegraphics[width=\linewidth]{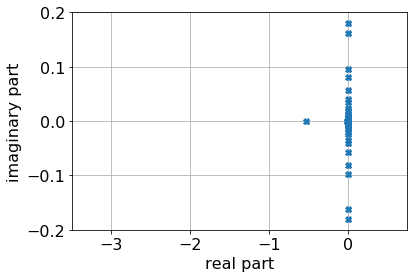}
    &\includegraphics[width=\linewidth]{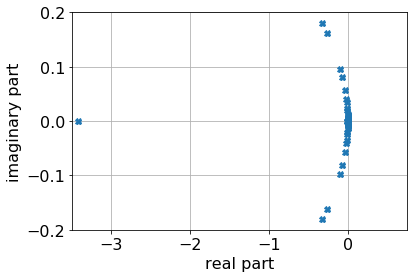}\\ 
   \end{tabular}
     \vspace{-0.2cm}
  \caption{\label{fig:distribution-eigval}
  Empirical distribution of eigenvalues before and after training using consensus optimization.
  The first column shows the distribution of the eigenvalues of the Jacobian $v^\prime(x)$ of the unmodified vector field $v(x)$.
  The second column shows the eigenvalues of the Jacobian $w^\prime(x)$ of the regularized vector field $w(x) = v(x) - \gamma \nabla L(x)$ used in consensus optimization.
  We see that $v^\prime(x)$ has eigenvalues close to the imaginary axis near the Nash-equilibrium. As predicted theoretically, this is not the case for the regularized vector field $w(x)$. For visualization purposes, the real part of the spectrum of $w^\prime(x)$ before training was clipped.
  }
\end{figure}

\begin{figure}[t]
\centering
\begin{subfigure}{0.3\textwidth}
\centering
 \includegraphics[width=\linewidth]{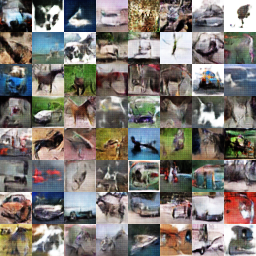}
 \caption{cifar-10}\label{fig:samples-cifar10}
\end{subfigure}
\qquad\qquad
\begin{subfigure}{0.3\textwidth}
\centering
 \includegraphics[width=\linewidth]{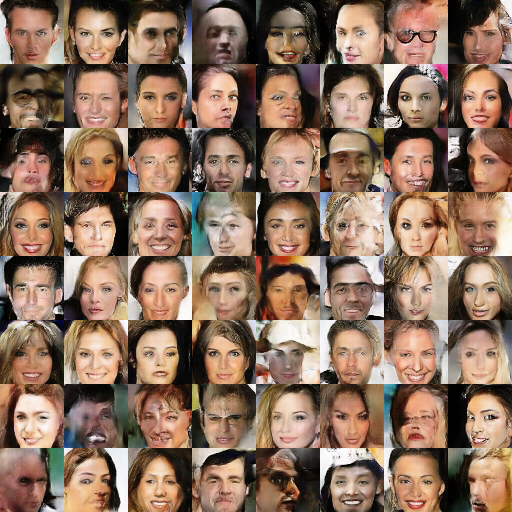}
 \caption{celebA}\label{fig:samples-celebA}
\end{subfigure}
\vspace{-0.2cm}
\caption{Samples generated from a model where both the generator and discriminator are given as in \cite{radford2015unsupervised}, but without batch-normalization. For celebA, we also use a constant number of filters in each layer and add additional RESNET-layers.}
\vspace{-0.2cm}
\end{figure}

\begin{figure}[t]
\centering
\begin{subfigure}[b]{0.32\textwidth}
\centering
\includegraphics[width=\linewidth]{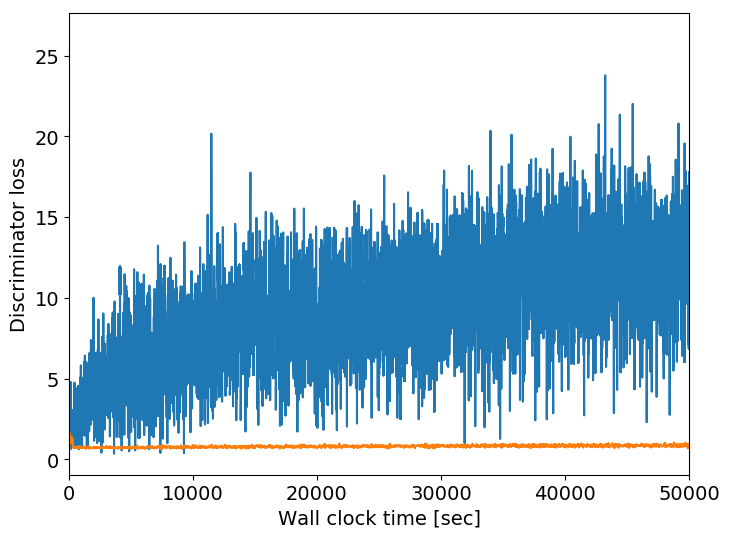}
\caption{Discriminator loss}
\end{subfigure}
\begin{subfigure}[b]{0.32\textwidth}
\centering
\includegraphics[width=\linewidth]{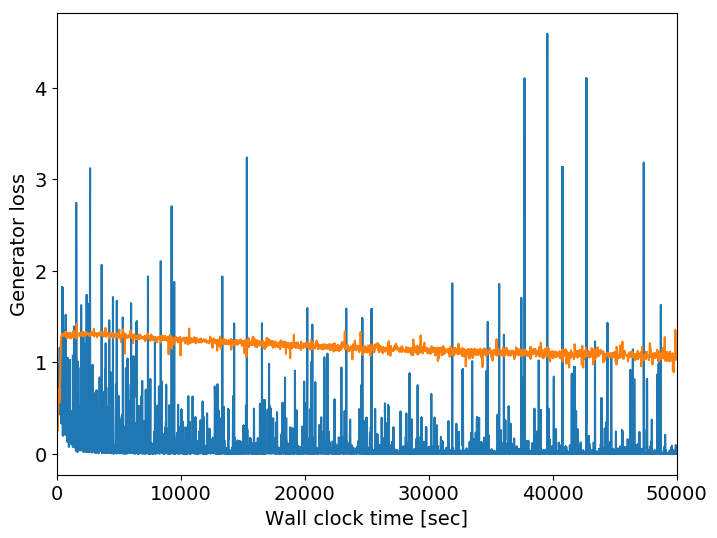}
\caption{Generator loss}
\end{subfigure}
\begin{subfigure}[b]{0.32\textwidth}
\centering
\includegraphics[width=\linewidth]{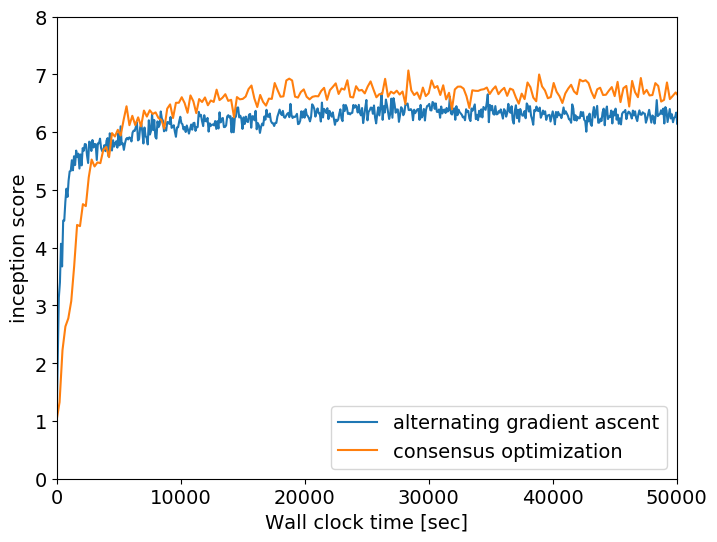}
\caption{Inception score}\label{fig:inception-score}
\end{subfigure}
\caption{(a) and (b): Comparison of the generator and discriminator loss on a DC-GAN architecture with $3$ convolutional layers trained on cifar-10 for consensus optimization  (without batch-normalization) and alternating gradient ascent (with batch-normalization). We observe that while alternating gradient ascent leads to highly fluctuating losses, consensus optimization successfully stabilizes the training and makes the losses almost constant during training. (c): Comparison of the inception score over time which was computed using $6400$ samples. We see that on this architecture both methods have comparable rates of convergence and consensus optimization achieves slightly better end results.}\label{fig:gen-disc-loss}
\vspace{-0.5cm}
\end{figure}
\paragraph{Mixture of Gaussians}
In our first experiment we evaluate our method on a simple $2D$-example where our goal is to learn a mixture of $8$ Gaussians with standard deviations equal to $10^{-2}$ and modes uniformly distributed around the unit circle. While simplistic, algorithms training GANs often fail to converge even on such simple examples without extensive fine-tuning of the architecture and hyper parameters \cite{metz2016unrolled}.

For both the generator and critic we use fully connected neural networks with 4 hidden layers and $16$ hidden units in each layer. For all layers, we use RELU-nonlinearities. We use a $16$-dimensional Gaussian prior for the latent code $z$ and set up the game between the generator and critic using the utility functions as in \cite{goodfellow2014generative}. To test our method, we run both SimGA and our method with RMSProp and a learning rate of $10^{-4}$ for $20000$ steps. For our method, we use a regularization parameter of $\gamma = 10$. 

The results produced by SimGA and our method for $0$,  $5000$, $10000$ and $20000$ iterations are depicted in Figure~\ref{fig:mix-gauss}. We see that while SimGA jumps around the modes of the distribution and fails to converge, our method converges smoothly to the target distribution (shown in red). Figure~\ref{fig:distribution-eigval} shows the empirical distribution of the eigenvalues of the Jacobian of $v(x)$ and the regularized vector field $w(x)$. It can be seen that near the Nash-equilibrium most eigenvalues are indeed very close to the imaginary axis and that the proposed modification of the vector field used in consensus optimization moves the eigenvalues to the left.

\paragraph{CIFAR-10 and CelebA}
In our second experiment, we apply our method to the cifar-10 and celebA-datasets, using a DC-GAN-like architecture \cite{radford2015unsupervised} without batch normalization in the generator or the discriminator. For celebA, we additionally use a constant number of filters in each layer and add additional RESNET-layers. These architectures are known to be hard to optimize using simultaneous (or alternating) gradient ascent \cite{radford2015unsupervised,arjovsky2017wasserstein}.

Figure~\ref{fig:samples-cifar10} and \ref{fig:samples-celebA} depict samples from the model trained with our method. We see that our method successfully trains the models and we also observe that unlike when using alternating gradient ascent, the generator and discriminator losses remain almost constant during training. This is illustrated in Figure~\ref{fig:gen-disc-loss}. For a quantitative evaluation, we also measured the inception-score \cite{salimans2016improved} over time (Figure~\ref{fig:inception-score}), showing that our method compares favorably to a DC-GAN trained with alternating gradient ascent. The improvement of consensus optimization over alternating gradient ascent is even more significant if we use $4$ instead of $3$ convolutional layers, see Figure~\ref{fig:gen-disc-loss-conv4} in the supplementary material for details.

Additional experimental results can be found in the supplementary material.

\section{Discussion}
While we could prove local convergence of our method in Section~\ref{sec:method}, we believe that even more insights can be gained by examining global convergence properties. In particular, our analysis from Section~\ref{sec:method} cannot explain why the generator and discriminator losses remain almost constant during training.

Our theoretical results assume the existence of a Nash-equilibrium. When we are trying to minimize an f-divergence and the dimensionality of the generator distribution is misspecified, this might not be the case \cite{arjovsky2017towards}. Nonetheless, we found that our method works well in practice and we leave a closer theoretical investigation of this fact to future research. 

In practice, our method can potentially make formerly instable stationary points of the gradient vector field stable if the regularization parameter is chosen to be high. This may lead to poor solutions. We also found that our method becomes less stable for deeper architectures, which we attribute to the fact that the gradients can have very different scales in such architectures, so that the simple L2-penalty from Section~\ref{sec:method} needs to be rescaled accordingly.

Our method can be regarded as an approximation to the implicit Euler method for integrating the gradient vector field. It can be shown that the implicit Euler method has appealing stability properties \cite{butcher2016numerical} that can be translated into convergence theorems for local Nash-equilibria. However, the implicit Euler method requires the solution of a nonlinear equation in each iteration. Nonetheless, we believe that further progress can be made by finding better approximations to the implicit Euler method.

An alternative interpretation is to view our method as a second order method. We hence believe that further progress can be made by revisiting second order optimization methods \cite{amari1998natural, pascanu2013revisiting} in the context of saddle point problems.

\section{Related Work}
Saddle point problems do not only arise in the context of training GANs. For example, the popular actor-critic models \cite{pfau2016connecting} in reinforcement learning are also special cases of saddle-point problems.

Finding a stable algorithm for training GANs is a long standing problem and multiple solutions have been proposed. 
Unrolled GANs \cite{metz2016unrolled} unroll the optimization with respect to the critic, thereby giving the generator more informative gradients. 
Though unrolling the optimization was shown to stabilize training, it can be cumbersome to implement and in addition it also results in a big model.
As was recently shown, the stability of GAN-training can be improved by using objectives derived from the Wasserstein-1-distance (induced by the Kantorovich-Rubinstein-norm) instead of f-divergences \cite{arjovsky2017wasserstein,gulrajani2017improved}. While Wasserstein-GANs often provide a good solution for the stable training of GANs, they require keeping the critic optimal, which can be time-consuming and can in practice only be achieved approximately, thus violating the conditions for theoretical guarantees.
Moreover, some methods like Adversarial Variational Bayes \cite{mescheder2017adversarial} explicitly prescribe the divergence measure to be used, thus making it impossible to apply Wasserstein-GANs.
Other approaches that try to stabilize training, try to design an easy-to-optimize architecture \cite{salimans2016improved, radford2015unsupervised} or make use of additional labels \cite{salimans2016improved, odena2016conditional}.

In contrast to all the approaches described above, our work focuses on stabilizing training on a wide range of architecture and divergence functions.

\section{Conclusion}
In this work, starting from GAN objective functions we analyzed the general difficulties of finding local Nash-equilibria in smooth two-player games.
We pinpointed the major numerical difficulties that arise in the current state-of-the-art algorithms and, using our insights, we presented a new algorithm for training generative adversarial networks.
Our novel algorithm has favorable properties in theory and practice:
from the theoretical viewpoint, we showed that it is locally convergent to a
Nash-equilibrium even if the eigenvalues of the Jacobian are problematic. This
is particularly interesting for games that arise in the context of GANs where
such problems are common.
From the practical viewpoint, our algorithm can be used in combination with any
GAN-architecture whose objective can be formulated as a two-player game to
stabilize the training. 
We demonstrated experimentally that our algorithm stabilizes the training and
successfully combats training issues like mode collapse. 
We believe our work is a first step towards an understanding of the
numerics of GAN training and more general deep learning objective functions.

\pagebreak
\section*{Acknowledgements}
This work was supported by Microsoft Research through its PhD Scholarship Programme.

\bibliography{bib/bibliography}{}
\bibliographystyle{plain}

\title{The Numerics of GANs: Supplementary Material}
\maketitle
\begin{abstract}
  This document contains proofs that were omitted in the main text of the paper ``The Numerics of GANs'' as well as additional theoretical results. We also include additional experimental results and demonstrate that our method leads to stable training of GANs on a variety of architectures and divergence measures.
\end{abstract}

\section*{Proofs}
This section contains proofs that were omitted in the main text.

\subsection*{Smooth two player games}
\lemmacharacterizationzerosum*
\begin{proof}
 We have for any $w = (w_1, w_2) \neq 0$
 \begin{equation}
  w^\T v^\prime( x) w  = w_1^\T  \nabla^2_\phi f(\phi, \theta) w_1 -  w_2^\T  \nabla^2_\theta f(\phi, \theta) w_2.
 \end{equation}
 Hence, we have $w^\T v^\prime( x) w  < 0$ for all vectors $w\neq 0$ if and only if $ w_1^\T \nabla^2_\phi f(\phi, \theta) w_1 < 0$ and $ w_2^\T  \nabla^2_\theta f(\phi, \theta) w_2 > 0$ for all vectors $w_1, w_2 \neq 0$.

 This shows that $v^\prime( x)$ is negative definite if and only if $ \nabla^2_\phi f(\phi, \theta)$ is negative definite and $\nabla^2_\theta f(\phi, \theta)$ is positive definite.

 A similar proof shows that  $v^\prime( x)$ is negative semi-definite if and only if $ \nabla^2_\phi f(\phi, \theta)$ is negative semi-definite and $\nabla^2_\theta f(\phi, \theta)$ is positive semi-definite.
\end{proof}

\corcharacterizationzerosum*
\begin{proof}
 If $\bar x$ is a local Nash-equilibrium, $\nabla^2_\phi f(\bar\phi, \bar\theta)$ is negative semi-definite and $\nabla^2_\theta f(\bar\phi, \bar\theta)$ is positive semi-definite, so  $v^\prime(\bar x)$  is negative definite by Lemma~\ref{lemma:characterization-zero-sum}.

 Conversely, if  $v^\prime(\bar x)$ is negative definite,  $\nabla^2_\phi f(\bar \phi, \bar \theta)$  is negative definite and $\nabla^2_\theta f(\bar \phi, \bar \theta)$ positive definite by Lemma~\ref{lemma:characterization-zero-sum}. This implies that $\bar x$ is a local Nash-equilibrium of the two-player game defined by $f$.
\end{proof}

\subsection*{Convergence theory}
\propfixedpointtheorem*
\prooffixedpointtheorem*

\lemmaevprojection*
\begin{proof}
 For $\lambda = -a + b\,i$ with $a > 0$ we have
 \begin{equation}
    |1 + h\,\lambda|^2 = (1 - h\,a)^2 + h^2\,b^2 = 1 + h^2\,b^2 + h^2\,a^2 - 2 \,h\,a.
 \end{equation}
  For $h > 0$, this is smaller than $1$ if and only if
  \begin{equation}
   h < \frac{2\,a}{b^2 + a^2} = \frac{2 a^{-1}}{ \left(b/a\right)^2 + 1}.
  \end{equation}
\end{proof}

\corNashconvergence*
\begin{proof}
 This is a direct consequence of Proposition~\ref{prop:fixed-point-theorem} and Lemma~\ref{lemma:ev-projection}.
\end{proof}

\lemmageneralprecondconvergence*
\begin{proof}
If $v(\bar x) = 0$, then $F(\bar x) = \bar x$, so $\bar x$ is a fixed point of $F$. Conversely, if $\bar x$ satisfies $F(\bar x) = \bar x$, we have $A(\bar x)\,v(\bar x) = 0$. Because we assume $A(\bar x)$ to be invertible, this shows $v(\bar x)= 0$.

Now, the $i^{th}$ partial derivative of $F(x)$ is given by
\begin{equation}
\partial_{x_i} F(x) = b_i + h \,\partial_{x_i} A(x)\, v(x) + h A(x) \partial_{x_i} v(x)
\end{equation}
where $b_i$ denotes the $i^{th}$ unit basis vector. For a fixed point $\bar x$ we have by the first part of the proof $v(\bar x) = 0$ and therefore
\begin{equation}
 \partial_{x_i} F(\bar x) = b_i + h A(\bar x)\partial_{x_i} v(\bar x).
\end{equation}
This shows
\begin{equation}
 F^\prime(\bar x) = I + h A(\bar x)\, v^\prime (\bar x).
\end{equation}

\end{proof}

\lemmageneralconvergence*
\begin{proof}
 We have for all $w \neq 0$:
 \begin{equation}
  w^\T A(\bar x) v^\prime(\bar x) w
  = w^\T v^\prime(\bar x) w - \gamma \|v^\prime(\bar x) w\|^2
  \leq - \gamma \|v^\prime(\bar x) w\|^2 < 0
 \end{equation}
  as $v^\prime(\bar x) w \neq 0$ for $w \neq 0$.
\end{proof}
\corconsensusconvergence*
\begin{proof}
 This is a direct consequence of Proposition~\ref{prop:fixed-point-theorem}, Lemma~\ref{lemma:general-precond-convergence} and Lemma~\ref{lemma:general-convergence}.
\end{proof}

\lemmaEVquotientinequality*
\begin{proof}
 Let $v \in \mathbb C^n \setminus \{0\}$ be any eigenvector of $B := A - \gamma A^\T A$ and $\lambda \in \mathbb C$ the corresponding eigenvalue. We can assume w.l.o.g. that
 $\|v \| = 1$.

 Then
\begin{equation}
\lambda = \lambda \bar v^\T v = \bar v^\T B v.
\end{equation}
This implies that
\begin{equation}
 \Re(\lambda) = \frac{\lambda + \bar \lambda}{2} = \frac{1}{2}\bar v^\T \left(B + B^\T\right) v
\end{equation}
and, similarly,
\begin{equation}
 \Im(\lambda) = \frac{\lambda - \bar \lambda}{2\,i} = \frac{1}{2\,i} \bar v^\T \left( B - B^\T \right)v.
\end{equation}
Consequently, we have
\begin{equation}
 \left| \frac{\Im(\lambda)}{\Re(\lambda)} \right|
 = \frac
    {\left|v^\T \left(B -  B^\T\right) v\right|}
    {\left|\bar v^\T \left( B +  B^\T \right)v\right|}
\end{equation}
and thus
\begin{equation}
  q(\gamma) \leq
  \max_{v \in \mathbb S(\mathbb C^n)} \frac
    {\left|v^\T \left(B -  B^\T\right) v\right|}
    {\left|\bar v^\T \left( B + B^\T \right)v\right|}.
\end{equation}
However, we have
\begin{equation}
 B -  B^\T = A - A^\T
 \quad \text{and} \quad
 B +  B^\T = A + A^\T - 2\gamma A^\T A.
\end{equation}
This implies, because $A$ is negative semi-definite, that
\begin{equation}
  \frac
    {\left|v^\T \left(B -  B^\T\right) v\right|}
    {\left|\bar v^\T \left( B + B^\T \right)v\right|}
    =
    \frac
    {\left|v^\T \left(A -  A^\T\right) v\right|}
    {\left|\bar v^\T \left( A + A^\T \right)v\right| + 2 \,\gamma \|Av\|^2}.
\end{equation}
Because $\|Av\|^2 \geq \rho \|v\|^2 = \rho$ this implies the assertion.
\end{proof}

\section*{Additional Theoretical Results}
This section contains some additional theoretical results. On the one hand, we demonstrate how the convergence of gradient ascent and common modifications  like momentum and gradient rescaling can be analyzed naturally using Proposition~\ref{prop:fixed-point-theorem}. On the other hand, we analyze alternating gradient ascent for two-player games and show that for small step sizes $h>0$ it is locally convergent towards a Nash-equilibrium if all the eigenvalues of the Jacobian of the associated gradient vector field have negative real-part.
We also discuss the bias of consensus optimization in a mini-batch setting and present a possible way to debias it.

\subsection*{Gradient Ascent}
Let
\begin{equation}
 v(x) = \nabla f(x).
\end{equation}
Then $v^\prime(x) = \nabla^2 f(x)$ is the Hessian-matrix of $f$ at $x$. Let
\begin{equation}\label{eq:operator-ga}
 F(x) = x + h\, v(x).
\end{equation}
A direct consequence of Proposition~\ref{prop:fixed-point-theorem} is
\begin{proposition}
 Any fixed point of \eqref{eq:operator-ga} is a stationary point of the gradient vector field $v(x)$. Moreover, if $\nabla^2 f(x)$ is negative definite, the fixed point iteration defined by $F$ is locally convergent towards $\bar x$  for small $h>0$.
\end{proposition}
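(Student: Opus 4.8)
The plan is to recognize that $F(x) = x + h\,v(x)$ is exactly the incremental map \eqref{eq:F-incremental} — equivalently the special case of \eqref{eq:transformed-vf} with preconditioner $A(x) \equiv I$ — so that both assertions follow from results already established. First I would dispatch the fixed-point characterization: since $h > 0$, the equation $F(\bar x) = \bar x$ is equivalent to $h\,v(\bar x) = 0$, hence to $v(\bar x) = 0$, so $\bar x$ is a stationary point of $v$. This is precisely the $A(x) \equiv I$ instance of Lemma~\ref{lemma:general-precond-convergence}, which moreover records that at such a point $F^\prime(\bar x) = I + h\,v^\prime(\bar x) = I + h\,\nabla^2 f(\bar x)$.

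For the convergence claim, the key observation is that $\nabla^2 f(\bar x)$ is symmetric, hence has only real eigenvalues, and negative definiteness forces them to be strictly negative; write $\mu_1, \dots, \mu_n < 0$ for these eigenvalues. Then the eigenvalues of $F^\prime(\bar x) = I + h\,\nabla^2 f(\bar x)$ are $1 + h\,\mu_i$, which lie in the open unit disk precisely when $0 < h < 2/\max_i |\mu_i|$; this is the content of Lemma~\ref{lemma:ev-projection}, whose bound \eqref{eq:maximum-stepsize} collapses to $h < 2/|\mu|$ here because the imaginary parts vanish. Applying Proposition~\ref{prop:fixed-point-theorem} then yields local, at least linear, convergence of $F^{(k)}(x_0)$ to $\bar x$ for such $h$. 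Equivalently, one may simply invoke the one-player specialization of Corollary~\ref{cor:nash-convergence}, since every eigenvalue of $v^\prime(\bar x) = \nabla^2 f(\bar x)$ has negative real part.

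There is essentially no hard step: the statement is a direct corollary of the machinery set up for the two-player case. The only point worth stressing — and the one I would make explicit in the write-up — is that symmetry of the Hessian removes the imaginary-part term in \eqref{eq:maximum-stepsize}, so that ordinary gradient ascent on a single objective never encounters the vanishingly-small-step-size pathology identified in Section~\ref{sec:theory}. That contrast is exactly what distinguishes local optimization from the genuine two-player setting and motivates the rest of the paper.
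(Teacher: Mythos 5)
Your proof is correct and takes essentially the same approach as the paper, whose own proof is the single line ``This follows directly from Proposition~\ref{prop:fixed-point-theorem}''; you have simply spelled out the implicit steps (fixed-point characterization, computation of $F^\prime(\bar x)$, symmetry of the Hessian giving real negative eigenvalues, Lemma~\ref{lemma:ev-projection}'s step-size bound). The observation that symmetry of $\nabla^2 f$ makes the imaginary-part term in \eqref{eq:maximum-stepsize} vanish is a nice touch, accurately reflecting the paper's contrast between single-objective optimization and the two-player setting.
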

\begin{proof}
 This follows directly from Proposition~\ref{prop:fixed-point-theorem}.
\end{proof}

\subsection*{Momentum}
Let $v(x)$ be a vector field. Using momentum, the operator $F$ can be written as
\begin{equation}\label{eq:operator-momentum}
 F(x, m) = (x, m) + h\, G(x, m)
\end{equation}
with $G(x, m) = (m, v(x) - \gamma \, m)$. The Jacobian of $G$ is given by
\begin{equation}\label{eq:momentum-system-matrix}
 G^\prime(x, m) =
 \begin{pmatrix}
    0 & I \\
    v^\prime (x) & -\gamma I
 \end{pmatrix}.
\end{equation}

\begin{lemma}
  $\lambda$ is an eigenvalue of $G^\prime(x, m)$ if and only if $\lambda(\lambda + \gamma)$ is an eigenvalue of $v^\prime(x)$.
\end{lemma}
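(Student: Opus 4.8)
The plan is to exploit the companion‑matrix (block) structure of $G^\prime(x,m)$ in \eqref{eq:momentum-system-matrix} and reduce the eigenvalue problem for the $2n\times 2n$ matrix to the one for the $n\times n$ matrix $v^\prime(x)$.

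First I would take an eigenpair of $G^\prime(x,m)$: suppose $(a,b)\in\mathbb C^n\times\mathbb C^n$ is nonzero with $G^\prime(x,m)\,(a,b)^\T=\lambda\,(a,b)^\T$. Reading off the two block rows of \eqref{eq:momentum-system-matrix} gives $b=\lambda a$ and $v^\prime(x)\,a-\gamma b=\lambda b$. Substituting the first relation into the second yields $v^\prime(x)\,a=(\lambda+\gamma)\lambda\,a=\lambda(\lambda+\gamma)\,a$. One then checks $a\neq 0$: if $a=0$ then $b=\lambda a=0$, contradicting $(a,b)\neq 0$. Hence $a$ is an eigenvector of $v^\prime(x)$ for the eigenvalue $\lambda(\lambda+\gamma)$. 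For the converse I would just reverse this: given $a\neq 0$ with $v^\prime(x)\,a=\lambda(\lambda+\gamma)\,a$, set $b:=\lambda a$; then the first block row of $G^\prime(x,m)\,(a,b)^\T=\lambda(a,b)^\T$ holds by construction, and the second reads $v^\prime(x)\,a-\gamma b=\lambda(\lambda+\gamma)a-\gamma\lambda a=\lambda^2 a=\lambda b$, so $(a,b)$ is a nonzero eigenvector of $G^\prime(x,m)$ for $\lambda$.

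Alternatively, and this additionally records that algebraic multiplicities match, I would compute the characteristic polynomial directly: since the $(1,1)$ and $(2,1)$ blocks of $G^\prime(x,m)-\lambda I$, namely $-\lambda I$ and $v^\prime(x)$, commute, the standard block‑determinant identity gives $\det\!\big(G^\prime(x,m)-\lambda I\big)=\det\!\big((-\lambda I)(-(\gamma+\lambda)I)-v^\prime(x)\big)=\det\!\big(\lambda(\lambda+\gamma)I-v^\prime(x)\big)$, and the left‑hand side vanishes if and only if the right‑hand side does.

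There is no substantial obstacle here; the only point requiring a word of care is the vanishing‑of‑$a$ step in the forward direction (equivalently, the degenerate $\lambda=0$ case of the block‑determinant formula, which is handled either by a density argument over invertible $-\lambda I$ or simply by the eigenvector computation above, which is valid for every $\lambda$).
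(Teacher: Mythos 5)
Your proof is correct and takes essentially the same approach as the paper's: read off the two block rows of the eigenvalue equation, eliminate the second coordinate to get $v^\prime(x)a=\lambda(\lambda+\gamma)a$, note that $a\neq 0$, and for the converse verify directly that $(w,\lambda w)$ is an eigenvector. The block-determinant remark is a welcome addition the paper omits, since it also shows the algebraic multiplicities agree.
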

\begin{proof}
 Let $(w_1, w_2)$ be an eigenvector of $G^\prime(x, m)$ as in \eqref{eq:momentum-system-matrix} with associated eigenvalue $\lambda$. Then
\begin{align}
 \lambda w_1 & = w_2 \\
 \lambda w_2 & = v^\prime(\bar x) w_1 - \gamma w_2,
\end{align}
showing that $\lambda (\lambda + \gamma) w_1 = v^\prime (\bar x) w_1$. As $w_1 =0 $ implies $w_2=0$, this shows that $w_1$ is an eigenvector of $v^\prime(\bar x)$ with associated eigenvalue $\lambda(\lambda + \gamma)$.

Conversely, let $\lambda(\lambda + \gamma)$ be an eigenvalue of $v^\prime(x)$ to the eigenvector $w$. We have
\begin{equation}
\begin{pmatrix}
  0 & I \\
  v^\prime (x) & -\gamma I
\end{pmatrix}
\begin{pmatrix}
  w \\
  \lambda w
\end{pmatrix}
 =
\begin{pmatrix}
\lambda w \\
\lambda(\lambda + \gamma) w - \gamma \lambda w
\end{pmatrix}
 =
 \lambda
\begin{pmatrix}
    w \\
    \lambda w
 \end{pmatrix}
\end{equation}
showing that $\lambda$ is an eigenvalue of $G^\prime(x,m)$ to the eigenvector $(w, \lambda w)$.
\end{proof}

\begin{corollary}\label{cor:local-convergence-momentum}
 Any fixed point $(\bar x, \bar m)$ of \eqref{eq:operator-momentum} satisfies $\bar m = 0$ and $v(\bar x) = 0$. Moreover, assume that $\gamma>0$ and that $v^\prime(\bar x)$ only has real negative eigenvalues. Then the fixed point iteration defined by \eqref{eq:operator-momentum} is locally convergent towards $(\bar x, \bar m)$ for small $h>0$.
\end{corollary}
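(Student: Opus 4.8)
The plan is to reduce the entire statement to Proposition~\ref{prop:fixed-point-theorem} by way of the eigenvalue correspondence in the Lemma immediately above. For the characterization of fixed points, observe that $(\bar x, \bar m)$ is a fixed point of \eqref{eq:operator-momentum} if and only if $h\,G(\bar x, \bar m) = 0$, and since $h>0$ this is equivalent to $G(\bar x, \bar m) = 0$. Unpacking $G(x,m) = (m,\, v(x) - \gamma m)$ gives $\bar m = 0$ from the first component, and then $v(\bar x) = \gamma \bar m = 0$ from the second. This disposes of the first assertion.

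For the convergence claim, I would note that $F$ has the incremental form \eqref{eq:F-incremental} with $G$ in place of the generic vector field, so $F^\prime(\bar x, \bar m) = I + h\,G^\prime(\bar x, \bar m)$ exactly (not merely to first order). By Lemma~\ref{lemma:ev-projection} applied with $A = G^\prime(\bar x, \bar m)$, it then suffices to show that every eigenvalue of $G^\prime(\bar x, \bar m)$ has strictly negative real part; choosing $h>0$ small enough that \eqref{eq:maximum-stepsize} holds for each such eigenvalue puts all eigenvalues of $F^\prime(\bar x, \bar m)$ strictly inside the unit ball, and Proposition~\ref{prop:fixed-point-theorem} then yields local linear convergence of the iteration towards $(\bar x, \bar m)$.

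Thus the heart of the argument is the claim that every eigenvalue $\mu$ of $G^\prime(\bar x, \bar m)$ satisfies $\Re(\mu) < 0$. By the preceding Lemma, $\mu$ is an eigenvalue of $G^\prime(\bar x, \bar m)$ precisely when $\mu(\mu + \gamma) = \lambda$ for some eigenvalue $\lambda$ of $v^\prime(\bar x)$, i.e. $\mu = \tfrac12(-\gamma \pm \sqrt{\gamma^2 + 4\lambda})$. By hypothesis each such $\lambda$ is real and negative, so I would split on the discriminant: if $\gamma^2 + 4\lambda \le 0$ the two roots have common real part $-\gamma/2 < 0$; if $\gamma^2 + 4\lambda > 0$ then $0 \le \sqrt{\gamma^2+4\lambda} < \gamma$ (using $\gamma > 0$ and $\lambda < 0$), so both roots are real and lie strictly between $-\gamma$ and $0$. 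In either case $\Re(\mu) < 0$. The main obstacle is just this case split on the over-/critically-/under-damped regimes; it is routine but must be handled carefully, and it is exactly here that the hypotheses $\gamma>0$ and $\lambda \neq 0$ (the latter guaranteed since $v^\prime(\bar x)$ has only negative eigenvalues) are used — with $\gamma = 0$ the roots would be purely imaginary and the argument would collapse.
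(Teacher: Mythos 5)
Your proof is correct and follows essentially the same route as the paper: characterize fixed points by unpacking $G=0$, then use the preceding eigenvalue-correspondence lemma to map eigenvalues $\lambda$ of $v'(\bar x)$ to roots $\mu$ of $\mu(\mu+\gamma)=\lambda$, verify $\Re(\mu)<0$ in all cases, and invoke Proposition~\ref{prop:fixed-point-theorem} (via Lemma~\ref{lemma:ev-projection}) for local convergence. The paper leaves the quadratic-formula case split implicit; you spell it out, which is a worthwhile elaboration rather than a different argument.
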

\begin{proof}
 It is easy to see that any fixed point of \eqref{eq:operator-momentum} must satisfy $\bar m = 0$ and $v(\bar x) = 0$.
 If all eigenvalues $\mu$ of $v^\prime(\bar x)$ are real and non-positive, then all solutions to $\lambda(\lambda + \gamma) = \mu$ have negative real part, showing local convergence by Proposition~\ref{prop:fixed-point-theorem}.
\end{proof}

Note that the proof of Corollary~\ref{cor:local-convergence-momentum} breaks down if $v^\prime(\bar x)$ has complex eigenvalues, as it is often the case for the associated gradient vector field of two-player games.

\subsection*{Gradient Rescaling}
In this section we investigate the effect of gradient rescaling as used in ADAM and RMSProp on local convergence. In particular, let
\begin{equation}\label{eq:operator-rescaled}
 F(x, \beta) =
 \begin{pmatrix}
    x + \frac{h}{\sqrt{\beta + \epsilon}} v(x)\\
   (1- \alpha) \beta + \alpha \| v(x)\|^2
 \end{pmatrix}
\end{equation}
for some $\alpha > 0$. The Jacobian of \eqref{eq:operator-rescaled} is then given by
\begin{equation}
    F^\prime(x, \beta) =
 \begin{pmatrix}
    I + \frac{h}{\sqrt{\beta + \epsilon}} v^\prime(x) & -\frac{h}{2(\beta + \epsilon)^{3/2}} v(x) \\
    \alpha \, v^\prime(x)^\T v(x)  & 1 - \alpha
 \end{pmatrix}.
\end{equation}

\begin{proposition}\label{prop:local-convergence-rescaled}
 Any fixed point $(\bar x, \bar \beta)$ of \eqref{eq:operator-rescaled} satisfies $\bar \beta = 0$ and $v(\bar x) = 0$. Moreover, assume that $\alpha \in (0, 1)$ and that all eigenvalues of $I + \frac{h}{\sqrt{\epsilon}}v^\prime(\bar x)$ lie in the unit ball. Then the fixed point iteration defined by \eqref{eq:operator-rescaled} is locally convergent towards $(\bar x, \bar \beta)$.
\end{proposition}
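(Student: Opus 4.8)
The plan is to invoke Proposition~\ref{prop:fixed-point-theorem} once the fixed points of $F$ have been identified and the Jacobian of $F$ has been evaluated there; the observation that makes everything go through is that $v(\bar x) = 0$ holds at any fixed point, which renders $F^\prime$ block diagonal.

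First I would pin down the fixed points. The second coordinate of the equation $F(x,\beta) = (x,\beta)$ reads $\bar\beta = (1-\alpha)\bar\beta + \alpha\|v(\bar x)\|^2$, and since $\alpha \neq 0$ this is equivalent to $\bar\beta = \|v(\bar x)\|^2$. The first coordinate reads $\frac{h}{\sqrt{\bar\beta+\epsilon}}\,v(\bar x) = 0$; as $h>0$ and $\bar\beta+\epsilon>0$, this forces $v(\bar x) = 0$, whence also $\bar\beta = \|v(\bar x)\|^2 = 0$. Conversely, any pair $(\bar x, 0)$ with $v(\bar x)=0$ is clearly fixed, so the fixed points are exactly these, as claimed.

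Next I would evaluate $F^\prime$ at such a point. Since $\epsilon>0$, $F$ is continuously differentiable in a neighborhood of $(\bar x, 0)$, and substituting $v(\bar x)=0$ and $\bar\beta=0$ into the Jacobian formula stated above annihilates the off-diagonal blocks $-\tfrac{h}{2(\bar\beta+\epsilon)^{3/2}}v(\bar x)$ and $\alpha\, v^\prime(\bar x)^\T v(\bar x)$, leaving
\begin{equation}
 F^\prime(\bar x, 0) = \begin{pmatrix} I + \tfrac{h}{\sqrt{\epsilon}}\, v^\prime(\bar x) & 0 \\ 0 & 1-\alpha \end{pmatrix}.
\end{equation}
Being block triangular, the spectrum of this matrix is the union of the spectrum of $I + \tfrac{h}{\sqrt{\epsilon}} v^\prime(\bar x)$ and the single eigenvalue $1-\alpha$. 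By hypothesis the former all have modulus strictly less than $1$, and $\alpha\in(0,1)$ gives $|1-\alpha|<1$; hence all eigenvalues of $F^\prime(\bar x, 0)$ lie strictly inside the unit circle, and Proposition~\ref{prop:fixed-point-theorem} yields local, at least linear, convergence of the iterates $F^{(k)}(x_0,\beta_0)$ to $(\bar x, 0)$.

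I do not expect a genuine obstacle here. The only thing to get right is the observation that $v(\bar x)=0$ at \emph{every} fixed point (forced by the $x$-update since $h>0$), which decouples, to first order, the dynamics of $x$ from that of the running second-moment estimate $\beta$; given that, the claim reduces to a one-line application of Proposition~\ref{prop:fixed-point-theorem}. A minor care point is that the upper-left entry in the displayed matrix is an $n\times n$ block, so the eigenvalue statement should be read with multiplicities, and one should note that "lying in the unit ball" is meant in the open sense, matching the strict inequality required by Proposition~\ref{prop:fixed-point-theorem}.
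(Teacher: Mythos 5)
Your proof is correct and follows essentially the same route as the paper: identify the fixed points as $\bar\beta = 0$, $v(\bar x)=0$, observe that the off-diagonal blocks of $F^\prime$ vanish there, and read off the spectrum of the resulting block-diagonal matrix to invoke Proposition~\ref{prop:fixed-point-theorem}. You are slightly more explicit than the paper about why the fixed-point conditions force $\bar\beta=0$ and $v(\bar x)=0$, but the argument is the same.
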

\begin{proof}
 It is easy to see that any fixed point  $(\bar x, \bar \beta)$ of \eqref{eq:operator-rescaled}  must satisfy $\bar \beta = 0$ and $v(\bar x) = 0$. We therefore have
 \begin{equation}\label{eq:Joperator-fixedpoint-rescaled}
    F^\prime(\bar x, \bar \beta) =
 \begin{pmatrix}
    I + \frac{h}{\sqrt{\epsilon}} v^\prime(x) & 0 \\
    0  & 1 - \alpha
 \end{pmatrix}.
\end{equation}
The eigenvalues of \eqref{eq:Joperator-fixedpoint-rescaled} are just the eigenvalues of $I + \frac{h}{\sqrt{\epsilon}} v^\prime(x)$ and $1 - \alpha$ which all lie in the unit ball by assumption.
\end{proof}

\subsection*{Alternating Gradient Ascent}
\begin{algorithm}[t!]
\caption{Alternating Gradient Ascent}
\label{alg:altga}
\begin{algorithmic}[1]
   \WHILE{not converged}
     \STATE $\phi \gets \phi + h\, \nabla_\phi f(\theta, \phi)$
     \STATE $\theta \gets \theta + h\, \nabla_\theta g(\theta, \phi)$
   \ENDWHILE
\end{algorithmic}
\end{algorithm}
Alternating Gradient Ascent (AltGA) applies gradient ascent-updates for the two players in an alternating fashion, see Algorithm~\ref{alg:altga}. For a theoretical analysis, we more generally regard fixed-point methods that iteratively apply  a function of the form
\begin{equation}
 F(x) = F_2(F_1(x))
\end{equation}
to $x$.
By the chain rule, the Jacobian of $F$ at $x$ is given by
\begin{equation}
 F^\prime( x) = F_2^\prime(F_1(x))) F_1^\prime(x).
\end{equation}
Assume now that $F_1(x) = x + h G_1(x)$ and $F_2(x) = x + h G_2(x)$. Then, if $\bar x$ is a fixed point of both $F_1$ and $F_2$, we have
\begin{equation}
 F^\prime(\bar x) = (I + h G_2^\prime (\bar x))(I + h G^\prime_1(\bar x))= I + h G_1^\prime(\bar x) + h G_2^\prime(\bar x) + h^2 G_2^\prime(\bar x) G_1^\prime(\bar x).
\end{equation}
Let $A(x) := G_2^\prime (x)+ G^\prime_1(x)$. Notice that for alternating gradient ascent, $A(x)$ is equal to the Jacobian $v^\prime(x)$ of the gradient vector field $v(x)$. The eigenvalues of $\tilde A_h(x) := A(x) + h G_2^\prime(x) G_1^\prime(x)$ depend continuously on $h$. Hence, for $h$ small enough, all eigenvalues of $\tilde A_h(x)$ will be arbitrarily close to the eigenvalues of $A(x)$.

As a consequence, we the following analogue of Corollary~\ref{cor:nash-convergence}:
\begin{proposition}
If the Jacobian $v^\prime(\bar x)$ of the gradient vector field $v(x)$ at a fixed point $\bar x$ only has eigenvalues with negative real part, Algorithm~\ref{alg:altga} is locally convergent to $\bar x$ for $h>0$ small enough.
\end{proposition}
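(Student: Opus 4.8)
The plan is to reduce the statement to Proposition~\ref{prop:fixed-point-theorem}, exactly as in the proof of Corollary~\ref{cor:nash-convergence}, using the chain-rule decomposition already set up in the surrounding text. First I would specialize the abstract setup to AltGA: take $F_1(\phi,\theta) = (\phi + h\nabla_\phi f(\phi,\theta),\,\theta)$ and $F_2(\phi,\theta) = (\phi,\,\theta + h\nabla_\theta g(\phi,\theta))$, so that $G_1(x) = (\nabla_\phi f(x),0)$, $G_2(x) = (0,\nabla_\theta g(x))$ and $A(x) = G_1'(x) + G_2'(x) = v'(x)$. Since $\bar x$ is a stationary point of $v$ we have $\nabla_\phi f(\bar x) = 0$ and $\nabla_\theta g(\bar x) = 0$, hence $G_1(\bar x) = G_2(\bar x) = 0$, so $\bar x$ is a fixed point of $F_1$, of $F_2$, and therefore of $F = F_2 \circ F_1$. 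Combining the excerpt's formula for $F'(\bar x)$ with the definition of $\tilde A_h$ then gives
\begin{equation}
  F'(\bar x) = I + h\,\tilde A_h(\bar x), \qquad \tilde A_h(\bar x) = v'(\bar x) + h\,G_2'(\bar x)\,G_1'(\bar x).
\end{equation}

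Next I would analyze the spectrum of $F'(\bar x)$ as $h \to 0$. The map $h \mapsto \tilde A_h(\bar x)$ is affine with $\tilde A_0(\bar x) = v'(\bar x)$, so its characteristic polynomial has coefficients that are polynomials in $h$ and its spectrum $\{\mu_1(h),\dots,\mu_n(h)\}$ varies continuously with $h$, coinciding at $h = 0$ with that of $v'(\bar x)$. For an eigenvalue $\mu$ of $\tilde A_h(\bar x)$, the associated eigenvalue $1 + h\mu$ of $F'(\bar x)$ satisfies $|1 + h\mu|^2 = 1 + h\bigl(2\Re(\mu) + h|\mu|^2\bigr)$, the same identity underlying Lemma~\ref{lemma:ev-projection}. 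Setting $\psi(h) := \max_i\bigl(2\Re(\mu_i(h)) + h|\mu_i(h)|^2\bigr)$, continuity of the spectrum makes $\psi$ continuous, and the hypothesis that every eigenvalue of $v'(\bar x)$ has negative real part gives $\psi(0) = \max_i 2\Re(\mu_i(0)) < 0$. Hence there is $h_0 > 0$ with $\psi(h) < 0$ for all $h \in (0,h_0)$, and for such $h$ every eigenvalue of $F'(\bar x)$ obeys $|1 + h\mu|^2 \le 1 + h\,\psi(h) < 1$. Applying Proposition~\ref{prop:fixed-point-theorem} at $\bar x$ then yields local convergence of $F^{(k)}(x_0)$ to $\bar x$, which is the assertion for Algorithm~\ref{alg:altga}.

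The one step that needs genuine care — the ``hard part'' — is that $\tilde A_h(\bar x)$ itself depends on $h$, so Lemma~\ref{lemma:ev-projection} cannot be quoted verbatim (that lemma fixes the matrix and then varies $h$). The remedy is the uniform estimate through $\psi$: it exploits that on a neighbourhood of $h = 0$ the real parts $\Re(\mu_i(h))$ stay bounded away from $0$ while the magnitudes $|\mu_i(h)|$ stay bounded, so a single threshold $h_0$ works for all eigenvalues simultaneously. The subsidiary facts being used — continuity of the roots of a polynomial in its coefficients, and the resulting Hausdorff-continuity of the spectrum — are standard, and everything else is the routine chain-rule bookkeeping already carried out in the excerpt, together with the observation there that $G_2'(\bar x)G_1'(\bar x)$ contributes only at order $h^2$.
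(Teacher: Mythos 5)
Your proof is correct and follows essentially the same route as the paper: the chain-rule decomposition $F = F_2 \circ F_1$, the expansion $F'(\bar x) = I + h\,\tilde A_h(\bar x)$ with $\tilde A_h(\bar x) = v'(\bar x) + h\,G_2'(\bar x)G_1'(\bar x)$, and a reduction to Proposition~\ref{prop:fixed-point-theorem}. The only difference is that the paper simply remarks that the eigenvalues of $\tilde A_h(\bar x)$ depend continuously on $h$ and are therefore close to those of $v'(\bar x)$ for small $h$, whereas you make the resulting uniform bound explicit via the function $\psi$; this fills in a small gap in the paper's sketch rather than constituting a different argument.
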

\begin{proof}
 This follows directly from the derivation above and Proposition~\ref{prop:fixed-point-theorem}.
\end{proof}

\vfill

\subsection*{The bias in consensus optimization}
To calculate $\nabla L(x)$ in consensus optimization, we need estimates of $L(x)$,
which is defined as
\begin{equation}
L(x) := \frac{1}{2} \| v(x) \|^2.
\end{equation}
In a mini-batch setting, we can obtain unbiased estimates of $v(x)$ by averaging all gradients $v_i(x)$ for the examples in the mini-batch:
\begin{equation}
v(x) \approx v_B(x) = \frac{1}{B} \sum_{i=1}^B v_i(x).
\end{equation}
However, when we substitute this estimate for $v(x)$ in the definition of $L(x)$, we obtain a biased estimator of $L(x)$ as the next lemma shows. The lemma also gives an explicit expression for the bias.
\begin{lemma}
Assume that $v_i(x)$, $i=1,\dots,B$,  are independent and identically distributed unbiased estimates of $v(x)$, i.e. $\E[ v_i(x) ] = v(x)$ for all $i$. Let
\begin{equation}
L_B(x) := \frac{1}{2} \bigl\| \frac{1}{B} \sum_{i=1}^B v_i(x) \bigr\|^2.
\end{equation}
Then $L_B(x)$ is a biased estimator of $L(x)$ with
\begin{equation}
\E [L_B(x)] = L(x) + \frac{1}{2\,B} \Var[ v_1(x) ].
\end{equation}
\end{lemma}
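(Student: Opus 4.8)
The plan is to use the standard bias--variance decomposition for the squared norm of the sample mean, together with independence. Write $\bar v_B(x) := \frac{1}{B}\sum_{i=1}^B v_i(x)$ for the mini-batch estimate, so that $L_B(x) = \tfrac{1}{2}\|\bar v_B(x)\|^2$. Here I interpret $\Var[v_1(x)]$ (the variance of a random vector) as $\E\!\left[\|v_1(x) - v(x)\|^2\right]$, i.e. the trace of the covariance matrix; this is the natural reading, since $L(x)$ involves $\|v(x)\|^2$.

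First I would record that $\bar v_B(x)$ is an unbiased estimate of $v(x)$: by linearity of expectation, $\E[\bar v_B(x)] = \frac{1}{B}\sum_{i=1}^B \E[v_i(x)] = v(x)$. Next, I would apply the decomposition $\E\!\left[\|\bar v_B(x)\|^2\right] = \|\E[\bar v_B(x)]\|^2 + \E\!\left[\|\bar v_B(x) - \E[\bar v_B(x)]\|^2\right] = \|v(x)\|^2 + \Var[\bar v_B(x)]$, which follows by expanding the inner product and using that the cross term vanishes in expectation.

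Then I would compute $\Var[\bar v_B(x)]$ using independence. Writing $\xi_i := v_i(x) - v(x)$, the $\xi_i$ are i.i.d. with mean $0$, so
\begin{equation}
 \Var[\bar v_B(x)] = \E\!\left[\Bigl\|\tfrac{1}{B}\sum_{i=1}^B \xi_i\Bigr\|^2\right]
 = \frac{1}{B^2}\sum_{i=1}^B \sum_{j=1}^B \E\!\left[\xi_i^\T \xi_j\right]
 = \frac{1}{B^2}\sum_{i=1}^B \E\!\left[\|\xi_i\|^2\right]
 = \frac{1}{B}\Var[v_1(x)],
\end{equation}
where the cross terms $i \neq j$ drop out because $\E[\xi_i^\T \xi_j] = \E[\xi_i]^\T \E[\xi_j] = 0$ by independence. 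Combining the two displays gives $\E[L_B(x)] = \tfrac{1}{2}\|v(x)\|^2 + \tfrac{1}{2B}\Var[v_1(x)] = L(x) + \tfrac{1}{2B}\Var[v_1(x)]$, as claimed.

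The computation is entirely routine; the only point requiring care is fixing the convention that $\Var$ of a vector-valued random variable denotes $\E\|\cdot - \E[\cdot]\|^2$ (equivalently the trace of the covariance), and making sure the cross-term cancellation is justified by independence rather than mere uncorrelatedness of components — though in fact pairwise uncorrelatedness of the $\xi_i$ suffices, so the hypothesis is even a little stronger than needed. No genuine obstacle is expected here.
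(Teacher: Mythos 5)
Your proof is correct and follows essentially the same route as the paper's: both use the identity $\E[\|v_B\|^2] - \|\E v_B\|^2 = \Var[v_B]$ together with the reduction $\Var[v_B] = \tfrac{1}{B}\Var[v_1]$ for independent summands. You merely spell out the cross-term cancellation and the convention $\Var[\cdot] = \E\|\cdot - \E[\cdot]\|^2$ more explicitly than the paper does.
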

\begin{proof}
We have
\begin{equation}
\frac{1}{2} \Var [ v_B(x) ]
= \frac{1}{2} \E [ \| v_B(x) \|^2 ] - \frac{1}{2}[ \|  \E v_B(x) \|^2 ]
= \E [L_B(x)] -  L(x).
\end{equation}
Moreover,
\begin{equation}
\Var [ v_B(x) ] = \frac{1}{B^2} \sum_{i=1}^B \Var[v_i(x)] = \frac{1}{B} \Var[v_1(x)].
\end{equation}
Together, this yields the assertion.
\end{proof}
Note that the bias is going down linearly with the batch size. In practice, we  found that a batch size of $64$ is usually sufficient to make consensus optimization work.

Alternatively, as was also noted by Ferenc Husz\'ar in a blog post on \url{www.inference.vc}, it is possible to obtain unbiased estimates of $L(x)$ by subtracting $1/{(2B - 2)}$ times the sample variance of the $v_i(x)$ from $L_B(x)$.

We leave a closer investigation of the benefits and disadvantages of this variant of consensus optimization to future research.

\vfill
\pagebreak
\section*{Additional Experimental Results}
\begin{figure}[h!]
  \centering
  \begin{tabular}{C{0.2\textwidth}C{0.2\textwidth}C{0.2\textwidth}C{0.2\textwidth}}
    \rule{0pt}{4ex}   & SimGA & AltGA & Consensus Optimization \\
    DC-GAN
    &\includegraphics[width=\linewidth]{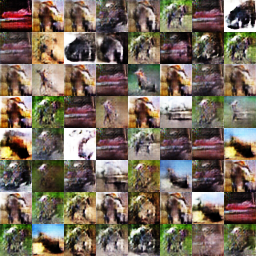}
    &\includegraphics[width=\linewidth]{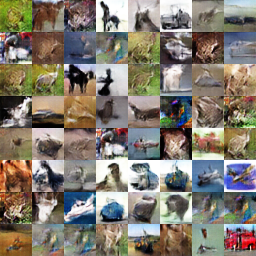}
    &\includegraphics[width=\linewidth]{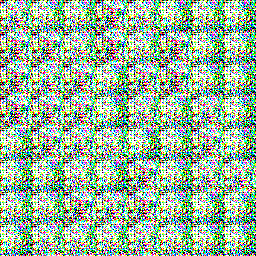}\\
    + no batch-normalization
    &\includegraphics[width=\linewidth]{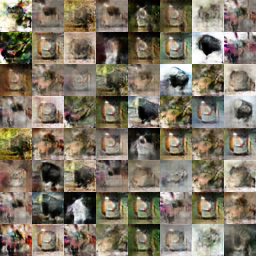}
    &\includegraphics[width=\linewidth]{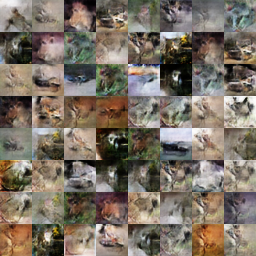}
    &\includegraphics[width=\linewidth]{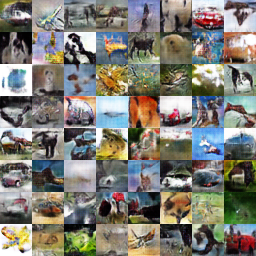}\\
    + constant number of filters in each layer
    &\includegraphics[width=\linewidth]{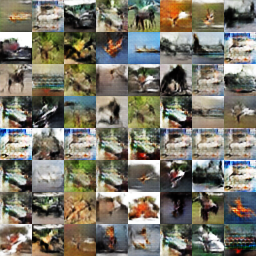}
    &\includegraphics[width=\linewidth]{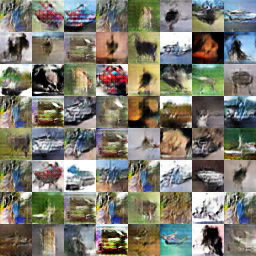}
    &\includegraphics[width=\linewidth]{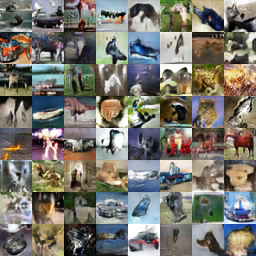}\\
    + Jensen-Shannon objective
    &\includegraphics[width=\linewidth]{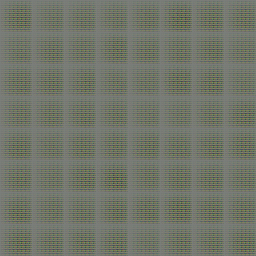}
    &\includegraphics[width=\linewidth]{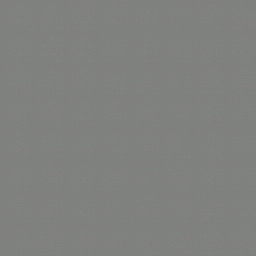}
    &\includegraphics[width=\linewidth]{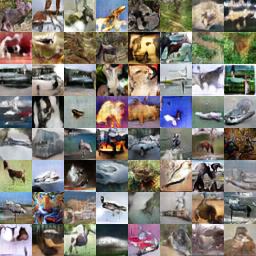}\\
   \end{tabular}
     \vspace{-0.2cm}
  \caption{\label{fig:comparison-simga}Comparison of the results obtained by training different architectures with simultaneous gradient ascent, alternating gradient ascent and consensus optimization:
  we train a standard DC-GAN architecture (with $4$ convolutional layers), a DC-GAN architecture without batch-normalization in neither the discriminator nor the generator, a DC-GAN architecture that additionally has a constant number of filters in each layer and a DC-GAN architecture that additionally uses the Jensen-Shannon objective for the generator that was originally derived in \cite{goodfellow2014generative}, but usually not used in practice as is was found to hamper convergence. While our algorithm struggles with the original DC-GAN architecture (with batch-normalization), it successfully trains all other architectures.}
\end{figure}

\begin{figure}[h!]
\centering
\begin{subfigure}[b]{0.45\textwidth}
\centering
\includegraphics[width=\linewidth]{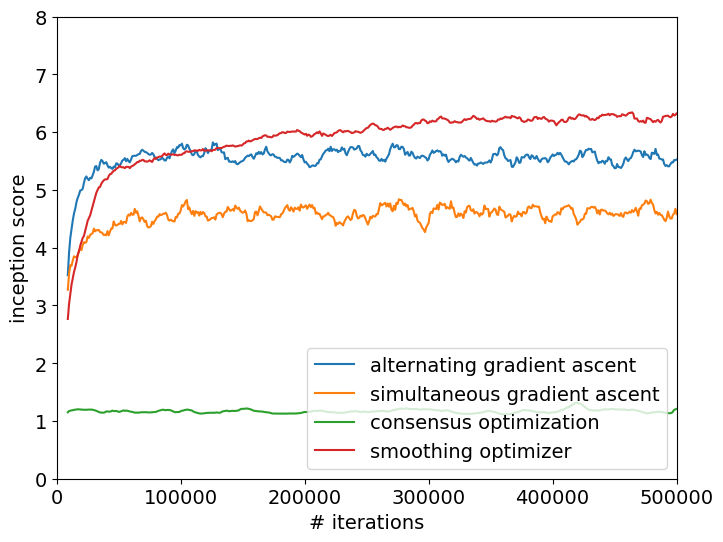}
\caption{DC-GAN}
\end{subfigure}
\quad
\begin{subfigure}[b]{0.45\textwidth}
\centering
\includegraphics[width=\linewidth]{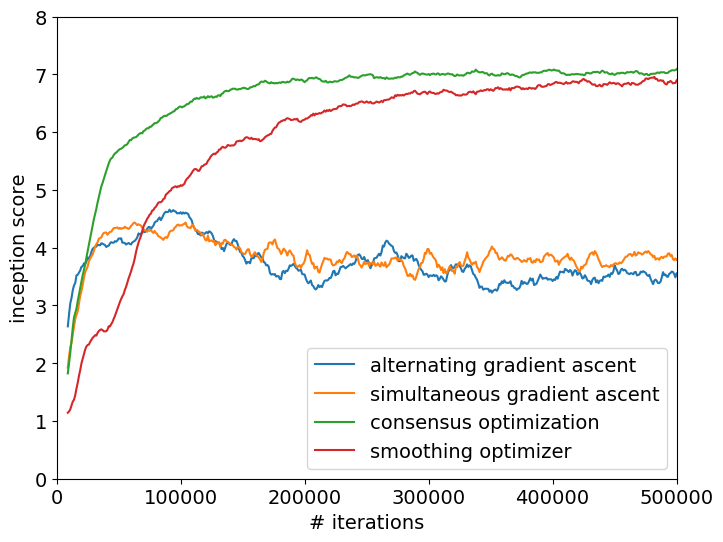}
\caption{+ no batch-normalization}
\end{subfigure}

\begin{subfigure}[b]{0.45\textwidth}
\centering
\includegraphics[width=\linewidth]{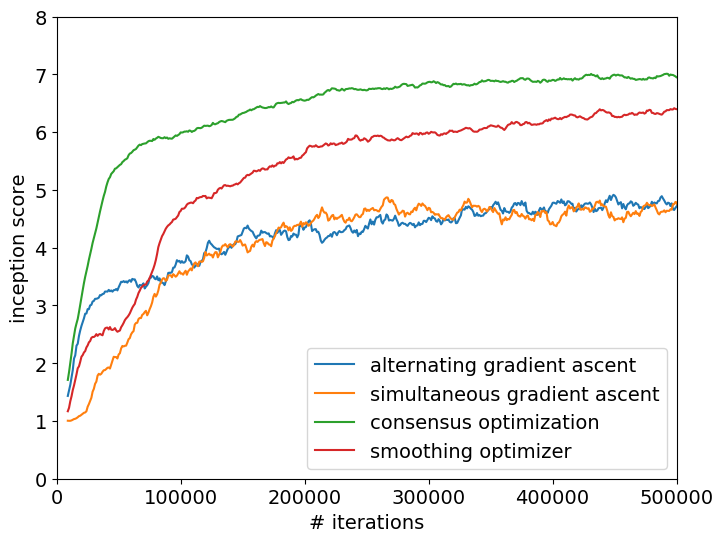}
\caption{+ constant number of filters in each layer}
\end{subfigure}
\quad
\begin{subfigure}[b]{0.45\textwidth}
\centering
\includegraphics[width=\linewidth]{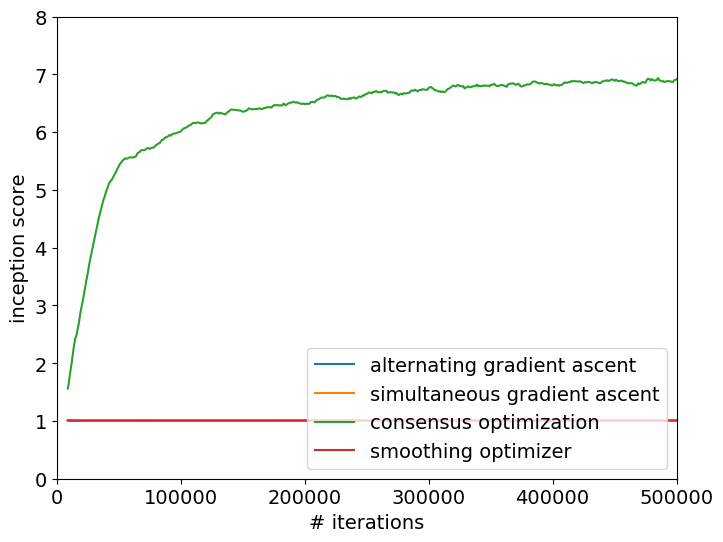}
\caption{+ Jensen-Shannon objective}
\end{subfigure}
\caption{\label{fig:comparison-inception}
Inception score over the number of iterations for the results in Figure~\ref{fig:comparison-simga}. To investigate if consensus optimization can alternatively be interpreted as a way of smoothing the discriminator, we also conducted experiments where we removed the regularization term from the generator loss but keep it for the discriminator loss. We call the corresponding algorithm \emph{smoothing optimizer}. While the smoothing optimizer trains a DC-GAN with batch-normalization successfully where consensus optimization fails, it usually performs worse than consensus optimization.}
\end{figure}

\begin{figure}[h!]
\centering

\begin{subfigure}[b]{0.45\textwidth}
\centering
\includegraphics[width=\linewidth]{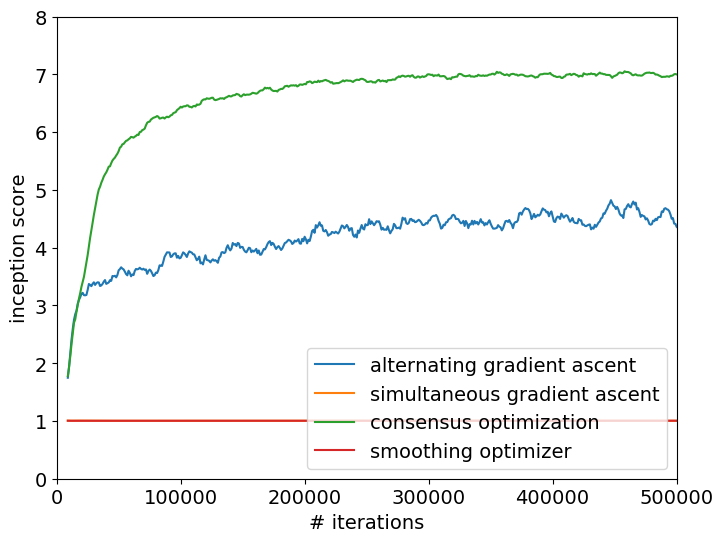}
\caption{+ constant number of filters in each layer}
\end{subfigure}
\quad
\begin{subfigure}[b]{0.45\textwidth}
\centering
\includegraphics[width=\linewidth]{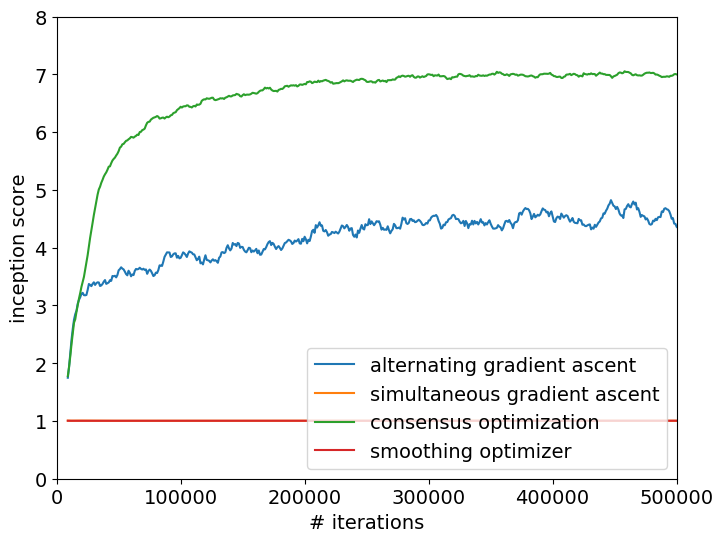}
\caption{+ Jensen-Shannon objective}
\end{subfigure}
\caption{Inception score over the number of iterations for training a the architecture from Figure~\ref{fig:comparison-inception} with an additional fully connected layer in the discriminator on cifar-10. We observe similar results as in Figure~\ref{fig:comparison-inception}, but the smoothing optimizer fails for the architecture with a constant number of filters in each layer where consensus optimization succeeds.}
\end{figure}

\begin{figure}[h!]
\centering
\begin{subfigure}{0.3\textwidth}
\centering
 \includegraphics[width=\linewidth]{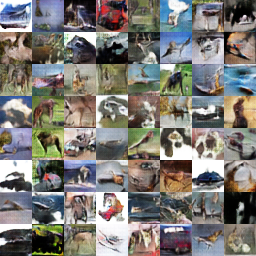}
 \caption{Standard GAN objective}
\end{subfigure}
\quad
\begin{subfigure}{0.3\textwidth}
\centering
 \includegraphics[width=\linewidth]{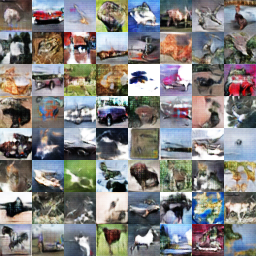}
 \caption{Jensen-Shannon divergence}
\end{subfigure}
\quad
\begin{subfigure}{0.3\textwidth}
\centering
 \includegraphics[width=\linewidth]{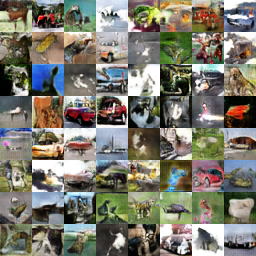}
 \caption{Indicator divergence}
\end{subfigure}
\caption{Using consensus optimization to train models with different generator objectives corresponding to different divergence functions: all samples were generated by a DC-GAN model without batch-normalization and with a constant number of filters in each layer. We see that consensus optimization can be used to train GANs with a large variety of divergence functions.}
\end{figure}

\begin{figure}[h!]
\centering
\begin{subfigure}{0.3\textwidth}
\centering
 \includegraphics[width=\linewidth]{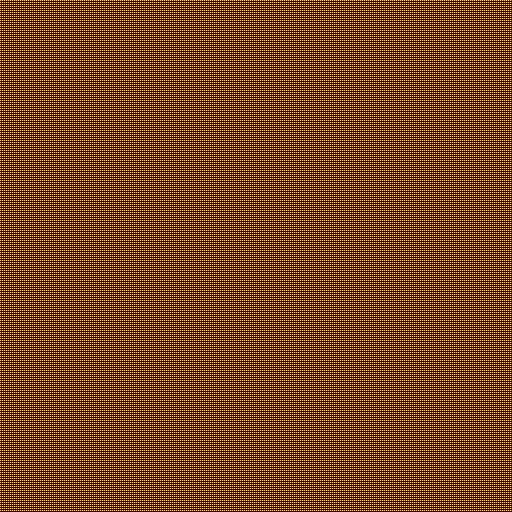}
 \caption{Simultaneous Gradient Ascent}
\end{subfigure}
\quad
\begin{subfigure}{0.3\textwidth}
\centering
 \includegraphics[width=\linewidth]{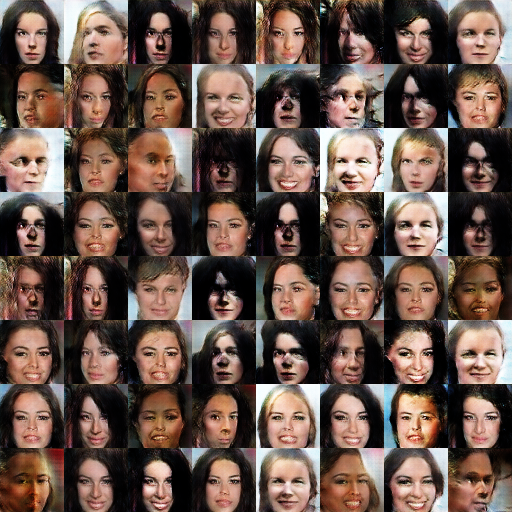}
 \caption{Alternating Gradient Ascent}
\end{subfigure}
\quad
\begin{subfigure}{0.3\textwidth}
\centering
 \includegraphics[width=\linewidth]{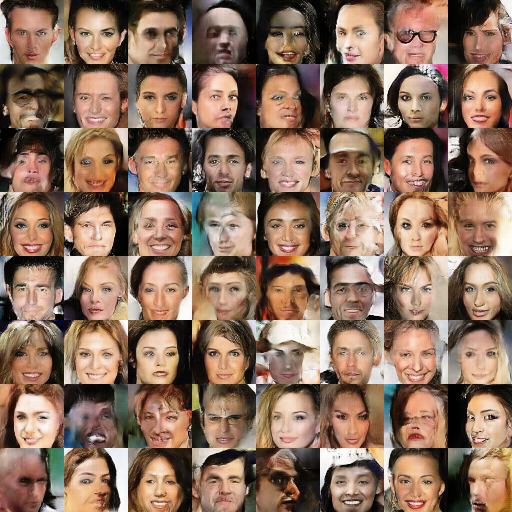}
 \caption{Consensus Optimization}
\end{subfigure}
\caption{Samples generated from a model where both the generator and discriminator are based on a DC-GAN \cite{radford2015unsupervised}. However, we use no batch-normalization, a constant number of filters in each layer and additional RESNET-layers. We see that only our method results in visually compelling results. While simultaneous gradient ascent completely fails to train the model, alternating gradient ascent results in a bad solution. Although alternating gradient ascent does better than simultaneous gradient ascent on this example, there is a significant amount of mode collapse and the results keep changing from iteration to iteration.}
\end{figure}

\begin{figure}[t]
\centering
\begin{subfigure}[b]{0.32\textwidth}
\centering
\includegraphics[width=\linewidth]{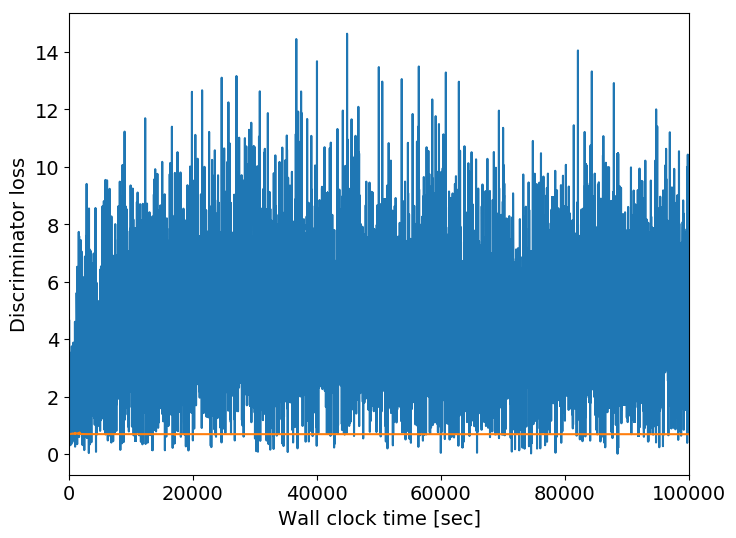}
\caption{Discriminator loss}
\end{subfigure}
\begin{subfigure}[b]{0.32\textwidth}
\centering
\includegraphics[width=\linewidth]{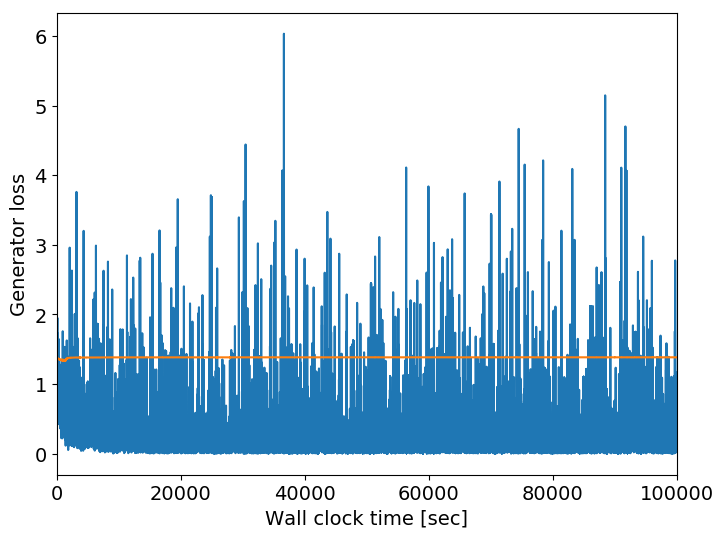}
\caption{Generator loss}
\end{subfigure}
\begin{subfigure}[b]{0.32\textwidth}
\centering
\includegraphics[width=\linewidth]{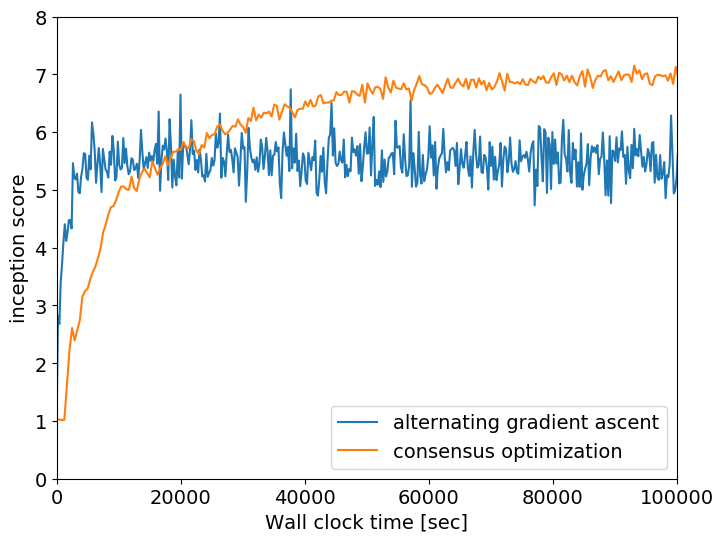}
\caption{Inception score}\label{fig:inception-score-conv4}
\end{subfigure}
\caption{(a) and (b): Comparison of the generator and discriminator loss on a DC-GAN architecture with $4$ convolutional layers trained on cifar-10 for consensus optimization  (without batch-normalization) and alternating gradient ascent (with batch-normalization). (c): Comparison of the inception score over time which was computed using $6400$ samples. We see that on this architecture consensus optimization achieves much better end results.}\label{fig:gen-disc-loss-conv4}
\vspace{-0.5cm}
\end{figure}

\FloatBarrier
\section*{Effect of Hyperparameters}
\begin{figure}[h!]
\centering
\begin{subfigure}[b]{0.45\textwidth}
\centering
\includegraphics[width=\linewidth]{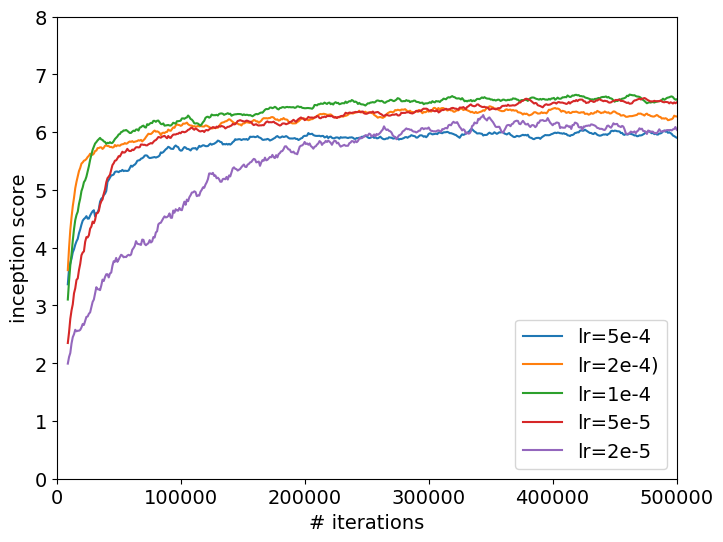}
\caption{$3$ convolutional layers}
\end{subfigure}
\quad
\begin{subfigure}[b]{0.45\textwidth}
\centering
\includegraphics[width=\linewidth]{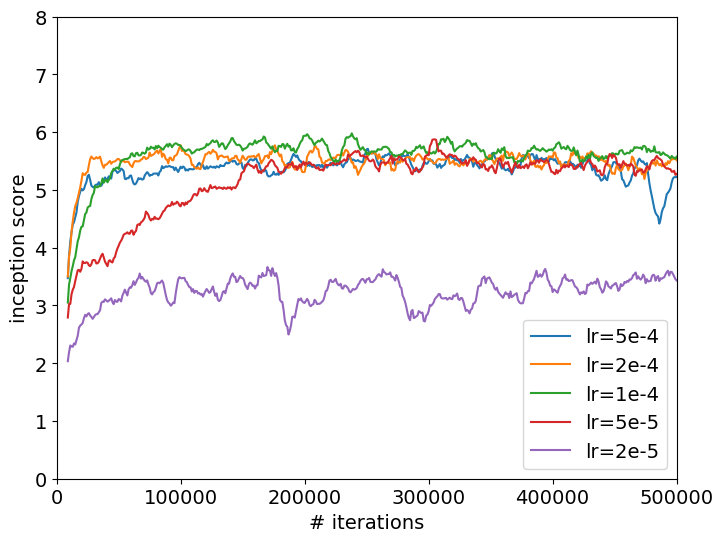}
\caption{$4$ convolutional layers}
\end{subfigure}
\caption{Effect of the learning rate on training a DC-GAN architecture with alternating gradient ascent. We use RMSProp~\cite{tieleman2012lecture} as an optimizer for all experiments. }
\end{figure}

\begin{figure}[h!]
\centering
\begin{subfigure}[b]{0.45\textwidth}
\centering
\includegraphics[width=\linewidth]{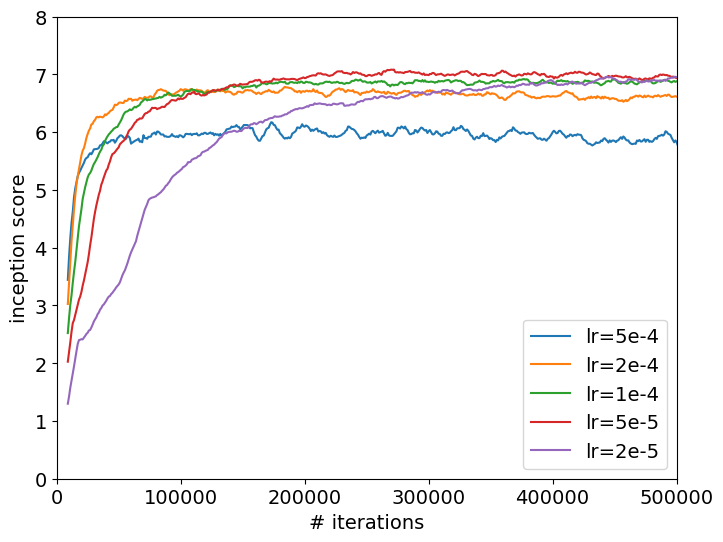}
\caption{$3$ convolutional layers}
\end{subfigure}
\quad
\begin{subfigure}[b]{0.45\textwidth}
\centering
\includegraphics[width=\linewidth]{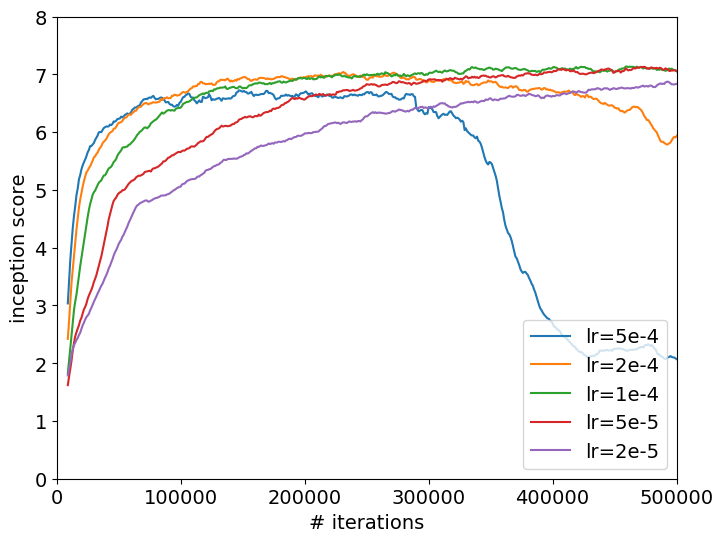}
\caption{$4$ convolutional layers}
\end{subfigure}
\caption{Effect of the learning rate on consensus optimization. We use the RMSProp-optimizer~\cite{tieleman2012lecture} for all experiments with a regularization parameter $\gamma=0.1$ for the architecture with $3$ convolutional layers and $\gamma=10$ for the architecture with $4$ convolutional layers. }
\end{figure}

\begin{figure}[h!]
\centering
\begin{subfigure}[b]{0.45\textwidth}
\centering
\includegraphics[width=\linewidth]{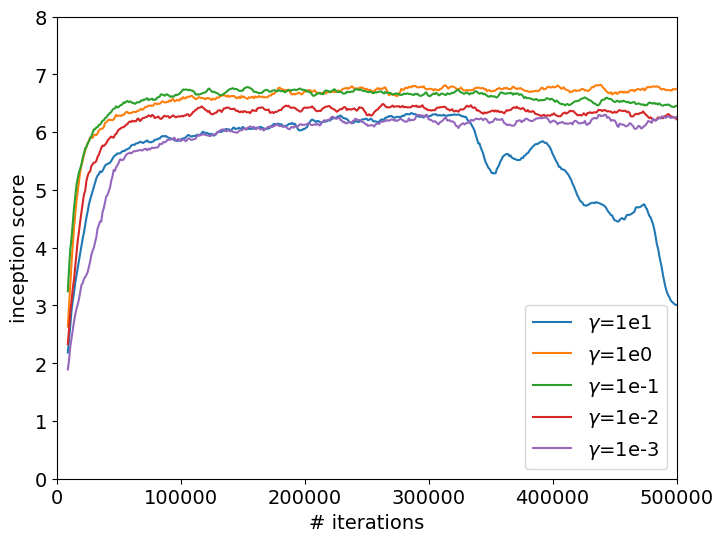}
\caption{$3$ convolutional layers}
\end{subfigure}
\quad
\begin{subfigure}[b]{0.45\textwidth}
\centering
\includegraphics[width=\linewidth]{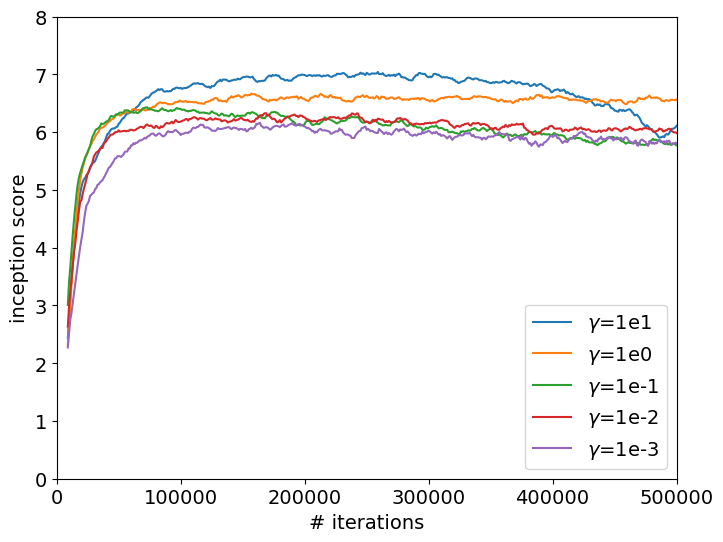}
\caption{$4$ convolutional layers}
\end{subfigure}
\caption{Effect of the regularization parameter $\gamma$ on consensus optimization. We use the RMSProp-optimizer~\cite{tieleman2012lecture} with a learning rate of $2 \cdot 10^{-4}$ for all experiments.}
\end{figure}

\end{document}